\DeclareMathOperator*{\argmax}{arg\,max}
\newtheorem{theorem}{Theorem}
\newtheorem{definition}{Definition}
\newtheorem{fact}{Fact}
\newcommand{\bTheta}{\mathbf{\Theta}}
\begin{document}
\normalem

\title{Scalable Structure Learning of Bayesian Networks by Learning Algorithm Ensembles}
\author{Shengcai Liu, \IEEEmembership{Member,~IEEE}, Hui Ou-yang, Zhiyuan Wang, Cheng Chen, Qijun Cai, \\Yew-Soon Ong, \IEEEmembership{Fellow,~IEEE}, and Ke Tang, \IEEEmembership{Fellow,~IEEE}
\IEEEcompsocitemizethanks{\IEEEcompsocthanksitem S. Liu, H. Ou-yang,  Z. Wang, C. Chen, Q. Cai, and K. Tang are with the Guangdong Provincial Key Laboratory of Brain-Inspired Intelligent Computation, Department of Computer Science and Engineering, Southern University of Science and Technology, Shenzhen 518055, China.
E-mail: liusc3@sustech.edu.cn (S. Liu), and tangk3@sustech.edu.cn (K. Tang).
\IEEEcompsocthanksitem Y.-S. Ong is with the Centre for Frontier AI Research, Institute of High Performance Computing, Agency for Science, Technology and Research, Singapore 138634, and the College of Computing and Data Science, Nanyang Technological University, Singapore 639798.
E-mail: asysong@ntu.edu.sg
}
\thanks{Corresponding author: K. Tang.}
}


\markboth{Journal of \LaTeX\ Class Files,~Vol.~14, No.~8, August~2021}%
{Shell \MakeLowercase{\textit{et al.}}: A Sample Article Using IEEEtran.cls for IEEE Journals}

\maketitle

\begin{abstract}
Learning the structure of Bayesian networks (BNs) from data is challenging, especially for datasets involving a large number of variables.
The recently proposed divide-and-conquer (D\&D) strategies present a promising approach for learning large BNs.
However, they still face a main issue of unstable learning accuracy across subproblems.
In this work, we introduce the idea of employing structure learning ensemble (SLE), which combines multiple BN structure learning algorithms, to consistently achieve high learning accuracy.
We further propose an automatic approach called Auto-SLE for learning near-optimal SLEs, addressing the challenge of manually designing high-quality SLEs.
The learned SLE is then integrated into a D\&D method.
Extensive experiments firmly show the superiority of our method over D\&D methods with single BN structure learning algorithm in learning large BNs, achieving accuracy improvement usually by 30\%$\sim$225\% on datasets involving 10,000 variables.
Furthermore, our method generalizes well to datasets with many more (e.g., 30000) variables and different network characteristics than those present in the training data for learning the SLE.
These results indicate the significant potential of employing (automatic learning of) SLEs for scalable BN structure learning.
\end{abstract}

\begin{IEEEkeywords}
Bayesian Networks, Structure Learning, Ensemble Methods
\end{IEEEkeywords}

\section{Introduction}
Learning the structure of Bayesian networks (BNs)~\cite{pearl1985bayesian} from data has attracted much research interest, due to its wide applications in machine learning, statistical modeling, and causal inference~\cite{pearl1988probabilistic,JinYJPHWYZ23,kitson2023survey}.
Various methods have been proposed to tackle this problem, including constraint-based methods~\cite{colombo2014order}, score-based methods~\cite{ramsey2017million}, and hybrid methods~\cite{tsamardinos2006max}.
However, most previous studies primarily dealt with a relatively small number of variables.
For example, the \textit{bnlearn} repository~\cite{scutari2010learning}, which is widely used in the literature, contains mostly networks with only a few dozen nodes (variables).
In comparison, in real-world applications such as alarm events analysis~\cite{cai2022thps}, MRI image interpretation \cite{ramsey2017million}, and human genome analysis \cite{schaffter2011genenetweaver}, it is common to generate and collect data from thousands of variables and beyond.
Unfortunately, as the number of variables increases, many of the existing methods would scale poorly, exhibiting significant slowdowns and reduced accuracy~\cite{zhu2021efficient,WangBCLZC25}.

Recently, divide-and-conquer methods (D\&C)~\cite{gu2020learning,ZhangZYGWZH22,WangBCLZC25} have been introduced to enhance the scalability of BN structure learning, particularly for large BNs.
These approaches generally involve three steps: partitioning nodes into clusters (partition), learning a subgraph on each cluster (estimation), and subsequently fusing or merging all subgraphs into a single BN (fusion).
The reduction in the number of nodes within each cluster, compared to the total network, accelerates the overall structure learning process.
Additionally, the structure learning for different subproblems can be readily parallelized, leading to improved computational efficiency.
Furthermore, these D\&C methods are flexible, as any existing structure learning algorithm can be employed in the estimation step. 

However, despite the evident advantages provided by D\&C methods, they still face a main issue of unstable structure learning accuracy across subproblems.
The root cause for this is that the partition step of these methods may yield subproblems with significantly different characteristics~\cite{gu2020learning}, e.g., varying node numbers.
When a single structure learning algorithm is used to solve all subproblems, as in existing D\&C methods~\cite{gu2020learning,ZhangZYGWZH22,WangBCLZC25}, achieving stable learning accuracy across different subproblems becomes challenging.
In fact, even for the same algorithm, different parameter values can lead to significant variations in its behavior, thereby affecting its suitability for solving specific subproblems.
For instance, when employing the well-known fast greedy equivalence search (fGES)~\cite{ramsey2017million} algorithm with Bayesian information criterion (BIC) as the score function, its penalty coefficient should ideally increase with the sparsity of the underlying BN.
However, given a BN structure learning problem, determining the optimal penalty coefficient in advance is difficult as the underlying BN is unknown.

Inspired by the success of ensemble methods like AutoAttack~\cite{Croce020a} in the field of adversarial robustness, which utilizes an attack ensemble to achieve more reliable robustness evaluation compared to individual attacks, we introduce the idea of employing algorithm ensemble, specifically structure learning ensemble (SLE), to achieve stable learning accuracy across subproblems in BN structure learning.
Specifically, a SLE comprises several structure learning algorithms, dubbed member algorithms. 
When applied to a BN structure learning problem, a SLE runs its member algorithms individually and chooses the best of their outputs.
Similar to how AutoAttack integrates complementary attacks, a high-quality SLE should consist of complementary member algorithms that excel at solving different types of problems.
However, \textit{in contrast to AutoAttack where the attacks are manually designed and selected, we propose to automatically learn SLEs from data}. 
This can significantly reduce the reliance on human expertise and effort, as manually constructing SLEs typically requires domain experts to explore the vast design space of SLEs, which can be both laborious and intricate.

\begin{figure}[tbp]
	\centering
	\includegraphics[width=\linewidth]{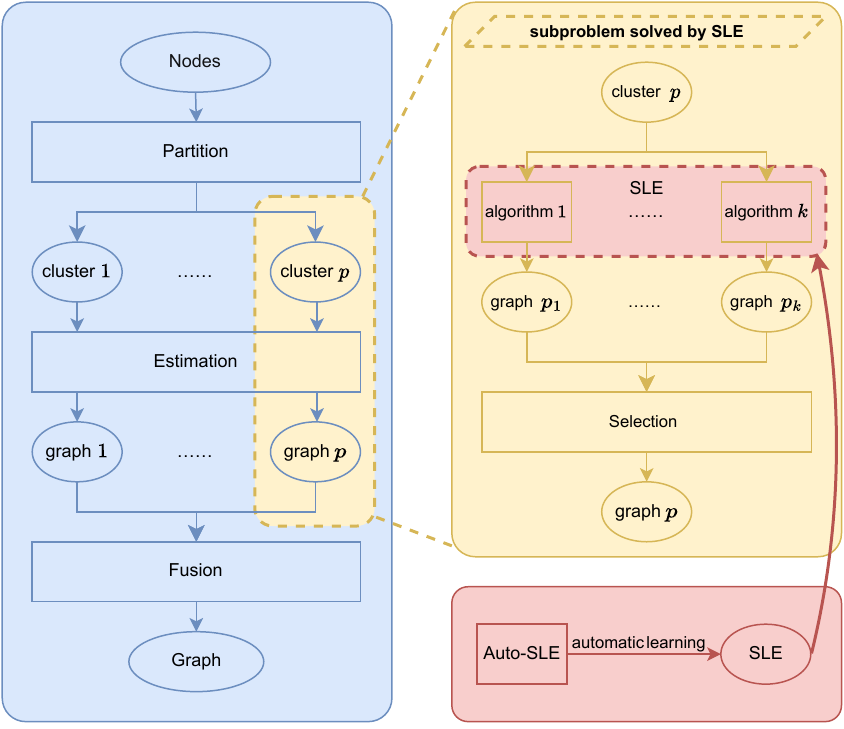}
	\caption{An overview of P/SLE. The SLE is automatically learned by Auto-SLE and integrated into the estimation step of PEF.}
	\label{fig:overview}
\end{figure}

Specifically, we first formulate the SLE learning problem, and then present Auto-SLE, a simple yet effective approach to address this problem.
This approach is implemented with a design space containing several candidate algorithms, each with a set of parameters.
To learn a SLE, Auto-SLE begins with an empty ensemble and iteratively adds the algorithm and its associated parameter values to the ensemble such that the ensemble improvement is maximized.
Beyond its conceptual simplicity, Auto-SLE is also appealing due to its ability to yield SLEs that are provably within a constant factor of optimal for the training problem set.
Finally, the SLE learned by Auto-SLE can be integrated into the estimation step of existing D\&C methods.
In this work, it is integrated into the partition-estimation-fusion (PEF) method~\cite{gu2020learning}.
The resultant method is dubbed P/SLE (see Figure~\ref{fig:overview} for an overview).
 
From extensive comparisons, it is found that P/SLE consistently achieves significantly higher accuracy in learning large BNs compared to existing PEF methods that utilize a single state-of-the-art algorithm in their estimation step. 
This superiority becomes more pronounced as the network size further increases.
On datasets involving 10,000 variables, P/SLE typically achieves accuracy improvement ranging from 30\%$\sim$225\%.
Furthermore, without any additional tuning or adaptation, P/SLE generalizes well to datasets with a much larger number (e.g., 30,000) of variables and different network characteristics than those present in the training data.

It should be noted that D\&C methods are generally most suitable for learning BNs with a block structure to some extent, meaning the connections between subgraphs are relatively weak.
It is quite common for a large network to exhibit such a block structure, due to the underlying heterogeneity among the nodes~\cite{holland1983stochastic,airoldi2008mixed,abbe2015exact}.
On the other hand, our results also indicate that, even for large BNs without a block structure, P/SLE can still achieve competitive learning accuracy.

It is worth mentioning that Auto-SLE can be applied to various types of BNs (Gaussian and non-Gaussian) and even other types of causal models.
In this work, we focus on Gaussian BNs, which are arguably the most widely studied type of BNs and have a rich set of baselines, allowing us to thoroughly assess the potential of Auto-SLE.
While PEF was chosen as the embedding D\&C method in this work, the learned SLE can also be integrated into other D\&C methods, such as the ones presented in~\cite{ZhangZYGWZH22,WangBCLZC25}.
Finally, the promising performance of P/SLE not only demonstrates the significant improvement that SLEs can bring to D\&C methods for scalable BN structure learning -- a largely unexplored area -- but also highlights the effectiveness of Auto-SLE for learning SLEs.

The main contributions of this article are summarized below.
\begin{itemize}
	\item The idea of embedding SLEs into the estimation step of D\&C methods is introduced to enhance the accuracy of large-scale BN structure learning.
    \item A novel formulation for the SLE learning problem is proposed, along with Auto-SLE, an approach for automatically learning SLEs.
    The effectiveness of this approach is further supported by theoretical analysis.
    \item The SLE learned by Auto-SLE is integrated into the PEF method, resulting in P/SLE.
    Its effectiveness and generalization capabilities are empirically validated through extensive experiments on large-scale BN structure learning problems.
\end{itemize}

The rest of this article is organized as follows.
Section~\ref{sec:pre_related_work} presents preliminaries and related work.
Section~\ref{sec:learning_SLEs} formally defines the SLE learning problem, followed by the proposed approach Auto-SLE.
Section~\ref{sec:exp} presents the experimental results and their analysis.
Finally, Section~\ref{sec:conclusion} concludes the article.


\section{Preliminaries and Related Work}
\label{sec:pre_related_work}

\subsection{The BN Structure Learning Problem}
The structure of a BN for $d$ random variables $X_1,...,X_d$ is represented by a directed acyclic graph (DAG), denoted as $\mathcal{G}=(V, E)$.
Here, $V=\{1,...,d\}$ is the set of nodes corresponding to the random variables and $E=\{(j,i) \in V \times V : j \rightarrow i\}$ is the directed edge set.
Define $V_{\mathrm{pa}(i)}=\{ j \in V:(j,i) \in E \}$ as the parent node set of node $i$, and $X_{\mathrm{pa}(i)}$ as the set of corresponding random variables.
The joint probability density function $f$ of $X_1,...,X_d$ is factorized according to the structure of $\mathcal{G}$: 
\begin{equation}
f \left( X_1, X_2,...,X_d \right) = \prod_{i=1}^{d} f \left( X_{i}|X_{\mathrm{pa}(i)} \right),
\end{equation}
where $ f\left( X_{i}|X_{\mathrm{pa}(i)} \right)$ is the conditional probability density of $X_i$ given $X_{\mathrm{pa}(i)}$.
In this work, we focus on Gaussian BNs for continuous data.
Specifically, the conditional distributions are specified by the following linear structural equation model:
\begin{equation}
\label{eq:lsem}
	X_i=\phi \left( X_{\mathrm{pa}(i)} \right) + \varepsilon_i, \ \ \ i=1,...,d,
\end{equation}
where $\phi(\cdot)$ denotes a linear function and $\varepsilon_i \sim \mathcal{N}\left(0, \sigma_j^2\right)$.
Suppose we have obtained $m$ iid observations of $X_{1}, \dots, X_{d}$, denoted as $D \in \mathbb{R}^{m \times d}$.
Given $D$, the goal is to learn a DAG structure $\mathcal{G}=(V, E)$ that accurately reflects the conditional dependencies among $X_1,\dots,X_d$.
In practice, accuracy metrics such as F1 score and structural Hamming distance (SHD) are typically used to assess the quality of the learned BN.

\subsection{Existing Methods for BN Structure Learning}
The BN structure learning problem has been proven to be NP-hard \cite{chickering2004large}, leading to main research efforts on developing approximation methods to solve it.
These methods can be broadly classified into constraint-based, score-based, and hybrid methods.
Constraint-based methods, such as PC \cite{spirtes2000causation}, MMPC \cite{tsamardinos2003time}, and PC-Stable \cite{colombo2014order}, use conditional independence tests on observations to identify relationships among variables.
In comparison, score-based methods explore the space of DAGs or Markov equivalence classes (MECs) using search heuristics such as tabu search (TS)~\cite{bouckaert1995bayesian}, genetic algorithm (GA) \cite{larranaga1996learning,LeeK20,YanFLZSW23}, and greedy search~\cite{chickering2002learning}.
These methods also employ score functions, such as BDeu \cite{akaike1974new}, BIC \cite{schwarz1978estimating}, and K2 \cite{cooper1992bayesian}, to guide the search.
It is worth mentioning a recent research line of score-based methods including NOTEARS \cite{zheng2018dags} and LEAST \cite{zhu2021efficient} that reformulate the structure learning as a continuous optimization problem.
Finally, hybrid methods integrate both constraint-based and score-based techniques.
For example, MMHC~\cite{tsamardinos2006max} uses MMPC to build the graph skeleton and utilizes TS to determine the final BN.

However, as the number of variables increases, many of the existing methods would slow down dramatically and become much less accurate~\cite{zhu2021efficient,WangBCLZC25}.
Actually, based on our preliminary testing of 15 existing methods, fGES~\cite{ramsey2017million}, which is a variant of the greedy search algorithm~\cite{chickering2002learning}, and PC-Stable~\cite{colombo2014order}, are the only methods capable of achieving F1 score of around 0.5 within reasonable runtime when the variable number reaches 1000 (see Section~\ref{sec:baseline_setting}).
Moreover, when the number of variables reaches 10,000, all methods, even after running for a quite long period of time (e.g., 24 hours), are unable to output solutions.




\subsection{D\&C Methods for Learning Large BNs}
To address the challenge of learning large BNs, a series of divide-and-conquer (D\&C) methods~\cite{gu2020learning,ZhangZYGWZH22,WangBCLZC25} have been proposed.
The strategy of these methods involves breaking down the large BN learning into smaller, more manageable sub-problems, solving these individually, and then integrating the results to form the overall network.
Since structure learning ensembles (SLEs) are embedded into the partition-estimation-fusion (PEF) method in this work, PEF is briefly introduced here.
The details of PEF can be found in the supplementary.
Introduced by Gu and Zhou~\cite{gu2020learning}, PEF comprises three steps:
\begin{itemize}
\item \textbf{Partition}: The $d$ nodes are partitioned into clusters with a hierarchical clustering algorithm.
\item \textbf{Estimation}: An existing structure learning algorithm is applied to estimate a subgraph on each cluster of nodes.
\item \textbf{Fusion}: Estimated subgraphs are merged into one DAG containing all the $d$ nodes.
\end{itemize}

While PEF and other D\&C methods have dramatically enhanced the capabilities of learning large BNs, they still face the unstable structure learning accuracy across subproblems.
In the following section, the use of automatically learned SLE in the estimation step to address this issue will be described.

\section{Automatic Learning of SLEs}
\label{sec:learning_SLEs}

\subsection{Problem Formulation}
We first formulate the SLE Learning problem.
Formally, a SLE with $k$ member algorithms is denoted as $\mathbb{A}=\{\theta_1,\dots,\theta_k\}$, where $\theta_i$ represents the $i$-th member algorithm of $\mathbb{A}$.
Let $T=\{D_1, D_2, \dots \}$ denote a training problem set, where each $D_i$ represents a BN structure learning problem with known ground truth.
When using $\mathbb{A}$ to solve a problem $D \in T$, one straightforward strategy is to run all member algorithms of $\mathbb{A}$ individually in parallel, and the best solution among all the found solutions in terms of a quality measure $Q$ (e.g., F1 score) is returned.
Let $Q(\mathbb{A}, D)$ and $Q(\mathbb{\theta}_i, D)$ denote the performance of $\mathbb{A}$ and $\theta_i$ on $D$ in terms of $Q$, respectively. 
Without loss of generality, we assume a larger value is better for $Q$.
Then we have:
\begin{equation}
	Q(\mathbb{A}, D) = \max_{\theta_i \in \mathbb{A}} Q(\mathbb{\theta}_i, D).
\end{equation}
Then the performance of $\mathbb{A}$ on $T$ in terms of $Q$, denoted as $Q(\mathbb{A}, T)$,  is the average value of $\mathbb{A}$'s performance on the training problems in $T$ (Note $Q(\mathbb{A}, T)=0$ when $\mathbb{A}$ is empty):
\begin{equation}
	Q(\mathbb{A}, T)= \frac{1}{|T|} \sum_{D \in T} Q(\mathbb{A}, D).
\end{equation}

For the automatic learning of $\mathbb{A}$, the member algorithms of $\mathbb{A}$ are not manually determined but automatically identified from an algorithm configuration space $\bTheta$.
Specifically, suppose we have a candidate algorithm pool $\{\mathcal{A}_1,...,\mathcal{A}_n\}$, which can be constructed by collecting existing BN structure learning algorithms.
Each candidate algorithm has some parameters.
Let $\Theta_i$ denote the parameter configuration space of $\mathcal{A}_i$, where a configuration $\theta \in \Theta_i$ refers to a setting of $\mathcal{A}_i$'s parameters, such that its behaviors is completely specified.
Then, the algorithm configuration space $\bTheta$ is defined as $\bTheta = \Theta_1 \cup \Theta_2 \dots \cup \Theta_n $, and each $\theta \in \bTheta$ represents a specific candidate algorithm along with the specific values of its parameters.
As presented in Definition~\ref{def:prob_formulation}, the SLE learning problem is to find $k$ member algorithms from $\bTheta$ to form a SLE $\mathbb{A}^*$, such that its performance on the training set $T$ in terms of $Q$ is maximized.
\begin{definition}
	\label{def:prob_formulation}
	Given $T, Q, \bTheta$, and $k$, the SLE learning problem is to find $\mathbb{A}^*=\{ \theta_1^{*} \dots\ \theta_k^{*} \}$ that maximizes $Q(\mathbb{A}^*, T)$, s.t. $\theta_i^{*} \in \bTheta$ for $i=1 \dots k$.
\end{definition}

\begin{algorithm}[tbp]
	\LinesNumbered
	\KwIn{quality measure $Q$, training set $T$, algorithm configuration space $\bTheta$, ensemble size $k$}
	\KwOut{$\mathbb{A}$}
	$\mathbb{A} \leftarrow \varnothing$, $i \leftarrow 1$;\\
	\While{$i \leq k$}
	{
		$\theta^{\circ} \leftarrow \argmax_{\theta \in \bTheta} Q\left(\mathbb{A} \cup \{\theta\}, T \right) - Q\left(\mathbb{A}, T\right)$;\\
		\lIf{$Q \left(\mathbb{A} \cup \{\theta^{\circ}\}, T \right) = Q\left(\mathbb{A}, T\right)$}{return $\mathbb{A}$}
		$\mathbb{A} \leftarrow \mathbb{A} \cup \{\theta^{\circ}\} $, $i \leftarrow i+1$;\\
	}
	\Return{$\mathbb{A}$}
	\caption{Auto-SLE}
	\label{alg:auto-sle}
\end{algorithm}

\subsection{Auto-SLE: A Greedy Learning Approach}
\label{sec:approach}
We now introduce Auto-SLE, a simple yet effective approach for automatically learning SLEs.
As shown in Algorithm~\ref{alg:auto-sle}, Auto-SLE starts with an empty ensemble $\mathbb{A}$ (line 1) and finds the candidate algorithm and its parameter values denoted as $\theta^{\circ}$ that, if included in $\mathbb{A}$, maximizes ensemble improvement in terms of $Q$ (line 3).
Ties are broken arbitrarily here.
After this, $\theta^{\circ}$ is subject to the following procedure:
if adding it to $\mathbb{A}$ does not improve performance, which means the learning process has converged, Auto-SLE will terminate and return $\mathbb{A}$ (line 4);
otherwise $\theta^{\circ}$ is added to $\mathbb{A}$ (line 5).
The above process will be repeated until $k$ member algorithms have been found (line 2).

Let $\Delta(\theta|\mathbb{A})=Q\left(\mathbb{A} \cup \{\theta\}, T \right) - Q\left(\mathbb{A}, T\right)$ denote the performance improvement brought by adding $\theta$ to $\mathbb{A}$.
Noticing that each iteration of Auto-SLE needs to find $\theta^{\circ}$ that maximizes $\Delta(\theta|\mathbb{A})$, when the algorithm configuration space $\bTheta$ is large or even infinite (e.g., candidate algorithms have continuous parameters), using enumeration to find $\theta^{\circ}$ is impractical.
In practice, we employ black-box parameter optimization procedures, such as Bayesian optimization~\cite{lindauer2022smac3}, to approximately maximize $\Delta(\theta|\mathbb{A})$.

\subsection{Theoretical Analysis}
We now theoretically analyze the performance of Auto-SLE on a give training problem set.
For notational simplicity, henceforth we omit the $T$ in $Q(\cdot, T)$ and directly use $Q(\cdot)$.
Our analysis is based on the following key fact that $Q(\cdot)$ is monotone and submodular.
\begin{fact}
	\label{fact:sub}
	$Q(\cdot)$ is a monotone submodular function, i.e., for any two SLEs $\mathbb{A},\mathbb{A}' \subset \bTheta $ and any $\theta \in  \bTheta$, it holds that $Q(\mathbb{A}) \leq Q(\mathbb{A} \cup \mathbb{A}')$ and $Q\left(\mathbb{A} \cup \mathbb{A}' \cup \{\theta\} \right)-Q\left(\mathbb{A} \cup \mathbb{A}'\right) \leq Q\left(\mathbb{A} \cup \{\theta \} \right)-Q\left(\mathbb{A}\right)$.
\end{fact}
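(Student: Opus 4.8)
The plan is to verify the two defining inequalities directly from the identity $Q(\mathbb{A},D)=\max_{\theta\in\mathbb{A}}Q(\theta,D)$, first pointwise for each training problem $D\in T$, and then lift the result to $Q(\cdot)=Q(\cdot,T)$ by averaging. The enabling observation is that a non-negative linear combination (in particular, the average $\tfrac{1}{|T|}\sum_{D\in T}$) preserves both monotonicity and submodularity, so it suffices to establish both properties for the per-problem set function $f_D(\mathbb{A}):=\max_{\theta\in\mathbb{A}}Q(\theta,D)$, with the convention $f_D(\varnothing)=0$; this convention is consistent with $Q(\varnothing,T)=0$ and with $\mathbb{A}\mapsto f_D(\mathbb{A})$ being monotone because $Q$ (the F1 score) is non-negative.

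For monotonicity, since $\mathbb{A}\subseteq\mathbb{A}\cup\mathbb{A}'$ the maximum defining $f_D(\mathbb{A}\cup\mathbb{A}')$ is taken over a superset of the one defining $f_D(\mathbb{A})$, hence $f_D(\mathbb{A})\le f_D(\mathbb{A}\cup\mathbb{A}')$; averaging over $D\in T$ gives $Q(\mathbb{A})\le Q(\mathbb{A}\cup\mathbb{A}')$. For submodularity, fix $D$ and set $\alpha=f_D(\mathbb{A})$, $\beta=f_D(\mathbb{A}\cup\mathbb{A}')$, and $t=Q(\theta,D)$; monotonicity already gives $\alpha\le\beta$. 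Then $f_D(\mathbb{A}\cup\{\theta\})=\max(\alpha,t)$ and $f_D(\mathbb{A}\cup\mathbb{A}'\cup\{\theta\})=\max(\beta,t)$, so the two marginal gains equal $\max(\alpha,t)-\alpha=(t-\alpha)^{+}$ and $\max(\beta,t)-\beta=(t-\beta)^{+}$, where $(x)^{+}=\max(x,0)$. Since $x\mapsto(x)^{+}$ is non-decreasing and $t-\beta\le t-\alpha$, we get $(t-\beta)^{+}\le(t-\alpha)^{+}$, which is exactly the diminishing-returns inequality $f_D(\mathbb{A}\cup\mathbb{A}'\cup\{\theta\})-f_D(\mathbb{A}\cup\mathbb{A}')\le f_D(\mathbb{A}\cup\{\theta\})-f_D(\mathbb{A})$; averaging over $D\in T$ yields the stated inequality for $Q$.

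I do not expect a real obstacle here: this is essentially the standard fact that a ``maximum'' set function is monotone and submodular, combined with closure of these properties under averaging. The only points requiring care are bookkeeping ones: matching the empty-ensemble convention to $Q(\varnothing,T)=0$, and observing that $\mathbb{A}$ and $\mathbb{A}'$ need not be disjoint and that $\theta$ need not be absent from $\mathbb{A}\cup\mathbb{A}'$ — cases in which one or both marginal gains vanish, but which are still covered because the argument is phrased through the $\max$ identities above, which hold unconditionally rather than requiring a case split on membership.
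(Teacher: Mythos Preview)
Your proposal is correct and follows essentially the same approach as the paper: both establish monotonicity from the superset-of-max argument and submodularity from the identity that the marginal gain at a single problem $D$ equals the positive part $[Q(\theta,D)-Q(\mathbb{A},D)]^{+}$, then use monotonicity to compare positive parts and average over $T$. The only cosmetic difference is that you first prove the two properties pointwise for $f_D$ and then invoke closure under averaging, whereas the paper keeps the sum $\tfrac{1}{|T|}\sum_{D\in T}$ explicit throughout the chain of inequalities.
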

\begin{proof}
	By definition, $Q(\mathbb{A}, D) = \max_{\theta \in \mathbb{A}} Q(\theta, D)$, then it holds that $Q(\mathbb{A} \cup \mathbb{A}', D) = \max_{\theta \in \mathbb{A} \cup \mathbb{A}'}Q(\theta, D) \geq \max_{\theta \in \mathbb{A}, }Q(\theta, D)$. The monotonicity holds.

	To prove submodularity, we have
	\begin{equation}
	\begin{aligned}
	\hspace{-2cm}Q\left(\mathbb{A} \cup \{\theta \} \right)-Q(\mathbb{A}) 
	&= \begin{aligned}[t] & \frac{1}{|T|} \sum_{D \in T} [Q(\mathbb{A} \cup \{\theta\}, D) - Q(\mathbb{A}, D) ]\hspace{-2cm}  \\& \quad \text{by definition of } Q(\cdot)\end{aligned}\\
	&= \begin{aligned}[t] &\frac{1}{|T|} \sum_{D \in T} [Q(\theta, D) - Q(\mathbb{A},D)]^{+}\hspace{-2cm}\end{aligned}\\
	&\geq \begin{aligned}[t] &\frac{1}{|T|} \sum_{D \in T} [Q(\theta, D) - Q(\mathbb{A} \cup \mathbb{A}',D)]^{+}\hspace{-2cm} \\& \quad \text{by monotonicity}\end{aligned}\\
	&= \begin{aligned}[t] &Q\left(\mathbb{A} \cup \mathbb{A}' \cup \{\theta\} \right)-Q\left(\mathbb{A} \cup \mathbb{A}'\right).\hspace{-2cm}\end{aligned}\\
	\end{aligned}
	\end{equation}
	The proof is complete.
\end{proof}
Intuitively, $Q$ exhibits a diminishing returns property that the marginal gain of adding $\theta$ diminishes as the ensemble size increases.
Based on Fact~\ref{fact:sub}, Theorem~\ref{theorem} holds.
\begin{theorem}
\label{theorem}
Using a parameter optimization procedure that, in each iteration of Auto-SLE, returns $\hat{\theta}$ within $\epsilon$-absolute error of the maximum of $\Delta(\theta | \mathbb{A})$, i.e., $\Delta(\hat{\theta} | \mathbb{A}) \geq \Delta(\theta^{\circ} | \mathbb{A}) - \epsilon$, then the quality $Q(\mathbb{A})$ of the SLE learned by Auto-SLE is bounded by
\begin{equation}
	Q(\mathbb{A}) \geq (1-1/e) \cdot Q(\mathbb{A}^*) - k\epsilon,
\end{equation}
where $\mathbb{A}^*$ is the optimal SLE to the SLE learning problem in Definition~\ref{def:prob_formulation}.
Alternatively, if $\hat{\theta}$ is within $\epsilon$-relative error of $\Delta(\theta^{\circ} | \mathbb{A})$, i.e.,  $\Delta(\hat{\theta} | \mathbb{A}) \geq \Delta(\theta^{\circ} | \mathbb{A}) \cdot (1-\epsilon)$,  then the quality $Q(\mathbb{A})$ of the SLE learned by Auto-SLE is bounded by
\begin{equation}
	Q(\mathbb{A}) \geq (1-1/e^{1-\epsilon}) \cdot Q(\mathbb{A}^*).
\end{equation} 
\end{theorem}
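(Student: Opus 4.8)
The plan is to mimic the classical Nemhauser--Wolsey--Fisher analysis of the greedy algorithm for maximizing a monotone submodular function under a cardinality constraint, tracking the extra error terms introduced by the inexact inner maximization. Let $\mathbb{A}_j$ denote the ensemble held by Auto-SLE after $j$ additions (so $\mathbb{A}_0 = \varnothing$ and $Q(\mathbb{A}_0) = 0$), and let $\hat\theta_{j+1}$ be the configuration added in iteration $j+1$. Write $\mathbb{A}^* = \{\theta_1^*,\dots,\theta_k^*\}$ for the optimal SLE. The key inequality to establish is that, at each step, the true greedy gain $\Delta(\theta^\circ \mid \mathbb{A}_j)$ is at least $\tfrac{1}{k}\bigl(Q(\mathbb{A}^*) - Q(\mathbb{A}_j)\bigr)$. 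This follows from monotonicity and submodularity (Fact~\ref{fact:sub}): by monotonicity $Q(\mathbb{A}^*) \le Q(\mathbb{A}_j \cup \mathbb{A}^*)$, and expanding $Q(\mathbb{A}_j \cup \mathbb{A}^*) - Q(\mathbb{A}_j)$ as a telescoping sum over the elements $\theta_1^*,\dots,\theta_k^*$ and bounding each telescoping term by $\Delta(\theta_i^* \mid \mathbb{A}_j)$ using submodularity, we get $Q(\mathbb{A}^*) - Q(\mathbb{A}_j) \le \sum_{i=1}^k \Delta(\theta_i^* \mid \mathbb{A}_j) \le k\,\Delta(\theta^\circ \mid \mathbb{A}_j)$, since $\theta^\circ$ maximizes the marginal gain over all of $\bTheta \supseteq \{\theta_1^*,\dots,\theta_k^*\}$.

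For the absolute-error case, combine this with the guarantee $\Delta(\hat\theta_{j+1}\mid \mathbb{A}_j) \ge \Delta(\theta^\circ\mid \mathbb{A}_j) - \epsilon$ to obtain the recursion
\[
Q(\mathbb{A}_{j+1}) - Q(\mathbb{A}_j) \;\ge\; \frac{1}{k}\bigl(Q(\mathbb{A}^*) - Q(\mathbb{A}_j)\bigr) - \epsilon .
\]
Rearranging gives $Q(\mathbb{A}^*) - Q(\mathbb{A}_{j+1}) \le \bigl(1-\tfrac1k\bigr)\bigl(Q(\mathbb{A}^*) - Q(\mathbb{A}_j)\bigr) + \epsilon$. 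Unrolling this from $j=0$ to $k$ and using $1 - \tfrac1k \le e^{-1/k}$, the multiplicative factor becomes $(1-1/k)^k \le 1/e$, and the accumulated additive error is $\epsilon \sum_{j=0}^{k-1}(1-1/k)^j \le k\epsilon$. This yields $Q(\mathbb{A}^*) - Q(\mathbb{A}_k) \le \tfrac1e Q(\mathbb{A}^*) + k\epsilon$, i.e. $Q(\mathbb{A}_k) \ge (1-1/e)Q(\mathbb{A}^*) - k\epsilon$. One subtlety: if Auto-SLE terminates early at line~4 with fewer than $k$ members, then $\Delta(\theta^\circ \mid \mathbb{A}_j) = 0$, which by the step inequality forces $Q(\mathbb{A}^*) \le Q(\mathbb{A}_j) + k\epsilon$ already, so the bound holds a fortiori; I would note this case explicitly.

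For the relative-error case, the recursion instead reads $Q(\mathbb{A}_{j+1}) - Q(\mathbb{A}_j) \ge \tfrac{1-\epsilon}{k}\bigl(Q(\mathbb{A}^*) - Q(\mathbb{A}_j)\bigr)$, giving $Q(\mathbb{A}^*) - Q(\mathbb{A}_{j+1}) \le \bigl(1-\tfrac{1-\epsilon}{k}\bigr)\bigl(Q(\mathbb{A}^*) - Q(\mathbb{A}_j)\bigr)$ with no additive term, so after $k$ steps $Q(\mathbb{A}^*) - Q(\mathbb{A}_k) \le \bigl(1-\tfrac{1-\epsilon}{k}\bigr)^k Q(\mathbb{A}^*) \le e^{-(1-\epsilon)} Q(\mathbb{A}^*)$, which is exactly $Q(\mathbb{A}_k) \ge (1 - 1/e^{1-\epsilon})Q(\mathbb{A}^*)$. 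I expect the main obstacle to be purely bookkeeping rather than conceptual: handling the early-termination branch cleanly, and being careful that the greedy step compares against $\theta^\circ$ (the true argmax) while the algorithm only obtains $\hat\theta$, so the chain $\Delta(\hat\theta\mid\mathbb{A}_j) \ge \Delta(\theta^\circ\mid\mathbb{A}_j) - \epsilon \ge \tfrac1k(Q(\mathbb{A}^*)-Q(\mathbb{A}_j)) - \epsilon$ must be threaded in the right order. Everything else is the standard telescoping-plus-geometric-series computation.
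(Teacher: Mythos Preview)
Your proposal is correct and follows essentially the same route as the paper's proof: both establish the key inequality $Q(\mathbb{A}^*) - Q(\mathbb{A}_j) \le k\,\Delta(\theta^\circ \mid \mathbb{A}_j)$ via monotonicity, telescoping, and submodularity, then derive the recursion $\delta_{j+1} \le (1-\tfrac1k)\delta_j + \epsilon$ (respectively $\delta_{j+1} \le (1-\tfrac{1-\epsilon}{k})\delta_j$) and unroll it using $1-x \le e^{-x}$. One minor note: in your early-termination discussion, with approximate optimization the termination condition is $\Delta(\hat\theta \mid \mathbb{A}_j) = 0$, which only gives $\Delta(\theta^\circ \mid \mathbb{A}_j) \le \epsilon$ rather than $= 0$; but your stated conclusion $Q(\mathbb{A}^*) \le Q(\mathbb{A}_j) + k\epsilon$ still follows, and in fact the paper's own proof does not address this branch at all, so you are being more careful here.
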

The proof is a slight extension of the classical derivations in maximizing $\Delta(\theta | \mathbb{A})$ ~\cite{NemhauserWF78}.
\begin{proof}
Order the candidate algorithms in $\mathbb{A}^*$ as $\{ \theta_1^{*} \dots\ \theta_k^{*} \}$.
We denote $\mathbb{A}=\{\theta_1, \theta_2, \dots, \theta_k\}$ where $\theta_i$ is the algorithm added to $\mathbb{A}$ in the $i$-th iteration of Auto-SLE.
Let $\mathbb{A}_i = \{\theta_1,\dots,\theta_i\}$ and let $\Delta(\theta|\mathbb{A})=Q\left(\mathbb{A} \cup \{\theta\} \right) - Q\left(\mathbb{A}\right)$ denote the performance improvement brought by adding $\theta$ to $\mathbb{A}$.

In the first case where $\Delta(\hat{\theta} | \mathbb{A}) \geq \Delta(\theta^{\circ} | \mathbb{A}) - \epsilon$, for all positive integers $i < l \leq k$, we have:
\begin{equation}
\begin{aligned}
Q(\mathbb{A}^*) 
&\leq \begin{aligned}[t]\quad&Q(\mathbb{A}^* \cup \mathbb{A}_i) \quad \text{by monotonicity}\end{aligned}\\
&= \begin{aligned}[t]\quad&Q(\mathbb{A}_i) + \sum_{j=1}^k \Delta(\theta_j^* | \mathbb{A}_i \cup \{\theta_1^*,\dots,\theta_{j-1}^* \}) \\&\quad \text{by telescoping sum}\end{aligned}\\
&\leq \begin{aligned}[t]\quad&Q(\mathbb{A}_i) + \sum_{\theta \in \mathbb{A}^*} \Delta(\theta | \mathbb{A}_i) \quad \text{by submodularity}\hspace{-2cm}\end{aligned}\\
&\leq \begin{aligned}[t]\quad&Q(\mathbb{A}_i) + \sum_{\theta \in \mathbb{A}^*} \Delta(\theta^{\circ} | \mathbb{A}_i) \quad \text{by definition of } \theta^{\circ}\end{aligned}\hspace{-2cm}\\
&\leq \begin{aligned}[t]\quad&Q(\mathbb{A}_i) + \sum_{\theta \in \mathbb{A}^*} \left( Q(\mathbb{A}_{i+1}) - Q(\mathbb{A}_i)  + \epsilon \right) \\&\quad \text{by }  \Delta(\theta^{\circ} | \mathbb{A}) \leq \Delta(\hat{\theta} | \mathbb{A}) + \epsilon \end{aligned}\\
&\leq \begin{aligned}[t]\quad&Q(\mathbb{A}_i) + k \left( Q(\mathbb{A}_{i+1}) - Q(\mathbb{A}_i)  + \epsilon \right)\end{aligned}.
\end{aligned}
\end{equation}
Let $\delta_i=Q(\mathbb{A}^*) - Q(\mathbb{A}_i)$, which allows us to rewrite the above equation as $\delta_i \leq k(\delta_i - \delta_{i+1} + \epsilon)$, then $\delta_{i+1} \leq (1 - \frac{1}{k}) \delta_i + \epsilon$.
Hence, we have
\begin{equation}
\begin{aligned}
\delta_{l} 
&\leq (1 - \frac{1}{k})^l \delta_0 + k\epsilon \cdot [1 - (1-\frac{1}{k})^l]\\
&\leq e^{-l/k} \delta_0 + k\epsilon \quad \text{by } 1-x \leq e^{-x} \text{ for all } x\in \mathbb{R}\\
&= e^{-l/k} (Q\left(\mathbb{A}^*) - Q(\mathbb{A}_0)\right) + k\epsilon\\
&= e^{-l/k} Q(\mathbb{A}^*) + k\epsilon \quad \text{ by that } \mathbb{A}_0=\varnothing \text{ and } Q(\mathbb{A}_0)=0\hspace{-2cm}.
\end{aligned}
\end{equation}
Rearranging $\delta_l=Q(\mathbb{A}^*) - Q(\mathbb{A}_l) \leq e^{-l/k} Q(\mathbb{A}^*) + k\epsilon$, we have
\begin{equation}
	Q(\mathbb{A}_l) \geq (1-e^{-l/k}) \cdot Q(\mathbb{A}^*) - k\epsilon.
\end{equation}
Since the SLE learned by Auto-SLE is $\mathbb{A}_k$, then we have
\begin{equation}
Q(\mathbb{A}_k) \geq (1-1/e) \cdot Q(\mathbb{A}^*) - k\epsilon.
\end{equation}

In the second case where $\Delta(\hat{\theta} | \mathbb{A}) \geq \Delta(\theta^{\circ} | \mathbb{A}) \cdot (1-\epsilon)$,
similarly, for all positive integers $i < l \leq k$, we have:
\begin{equation}
\begin{aligned}
	Q(\mathbb{A}^*) 
	&\leq \begin{aligned}[t]\quad&Q(\mathbb{A}_i) + \sum_{\theta \in \mathbb{A}^*} \left( Q(\mathbb{A}_{i+1}) - Q(\mathbb{A}_i)  \right) / (1-\epsilon)\hspace{-2cm} \\\quad &\text{by }  \Delta(\theta^{\circ} | \mathbb{A}) \leq \Delta(\hat{\theta} | \mathbb{A}) / (1-\epsilon)\end{aligned}\\
	&= \begin{aligned}[t]\quad&Q(\mathbb{A}_i) + \frac{k}{1-\epsilon} \left( Q(\mathbb{A}_{i+1}) - Q(\mathbb{A}_i)  \right)\end{aligned}.
\end{aligned}
\end{equation}
Similarly, let $\delta_i=Q(\mathbb{A}^*) - Q(\mathbb{A}_i)$ and use the above procedure, we have
\begin{equation}
Q(\mathbb{A}_l) \geq (1-e^{-l(1-\epsilon)/k}) \cdot Q(\mathbb{A}^*).
\end{equation}
Let $l=k$, we have
\begin{equation}
Q(\mathbb{A}_k) \geq (1-1/e^{1-\epsilon}) \cdot Q(\mathbb{A}^*).
\end{equation}
The proof is complete.
\end{proof}

Based on Theorem~\ref{theorem}, Auto-SLE achieves $(1-1/e)$-approximation for the optimal quality when given a perfect parameter optimization procedure with $\epsilon=0$.
Suboptimal parameter optimization procedures result in worse outcomes but small errors $\epsilon$ do not escalate.
This is important because with large algorithm configuration space, it cannot be expected for blackbox optimization procedures to find $\theta^\circ$ in realistic time.
However, at least for some scenarios with a few parameters, widely used Bayesian optimization techniques such as SMAC~\cite{lindauer2022smac3} have empirically been shown to yield performance close to optimal within reasonable time~\cite{LiuT019,LiuZTY23,tang2024learn}.

Note that Theorem~\ref{theorem} only provides a performance guarantee for the training problem set used for learning the SLE.
By employing the same techniques introduced by~\cite{LiuTL020}, we can recover guarantees for an independent testing set that is sampled or generated from the same distribution as the training set, given a sufficiently large training set.
Nevertheless, our experimental results (see Section~\ref{sec:results}) demonstrate that the learned SLE performs well beyond the training set.

\subsection{Using the Learned SLE to Solve a New Problem}
Importantly, when presented with a testing problem to solve, it cannot be assumed that the ground truth is known.
Thus, quality measures that do not require ground truth, such as the BIC score adopted in our experiments, are used to select the best output from the outputs of member algorithms.
It is worth mentioning that although this work focuses on utilizing the learned SLE to enhance D\&C methods, the SLE itself can also serve 
as a complete and independent BN structure learning method.

\section{Experiments}
\label{sec:exp}
Extensive experiments are conducted on both synthetic and real-world datasets to investigate the effectiveness of the proposed approach.
The experiments mainly aim to answer two key research questions (RQs):
\begin{enumerate}
\item \textbf{RQ1}: Does the SLE learned by Auto-SLE substantially enhance D\&C method, i.e., PEF, in learning large BNs?
\item \textbf{RQ2}: Can the learned SLE generalize to larger problem sizes and different network characteristics than that present in the training set?
\end{enumerate}
All the experiments are conducted on a Linux server equipped with an Intel(R)
Xeon(R) Gold 6336Y CPU @ 2.40GHz, 96 cores, and 768GB of memory.
All the data and codes are open sourced at \url{https://github.com/huiouyang16/Auto-SLE}.

\subsection{Experimental Setup}
\label{sec:exp_setup}
\subsubsection{Datasets}
Since existing public datasets typically involve limited numbers (usually much smaller than 1000) of variables, 
we generate diverse and large-scale datasets following the approach in~\cite{gu2020learning}.
Specifically, the approach consists of four steps:
(i) select an existing network structure and replicate it until a predefined variable number is reached; 
(ii) connect the replicas by adding $10\%$ of edges between them randomly, while ensuring the final network remains a DAG; 
(iii) utilize the complete DAG and Eq.~(\ref{eq:lsem}) to generate observations, where the weights in the linear function and the standard deviations of Gaussian noises are sampled uniformly from $\left[-1,-0.5\right] \cup \left[0.5,1\right]$ and $\left[0,1\right]$, respectively;
(iv) re-scale the observations such that all data columns have the same mean and standard deviation.

We select 10 networks from the \textit{bnlearn} repository~\cite{scutari2010learning} with node numbers ranging from 5 to 441.
Based on each of them, we use the above approach to generate testing problems with around 1000 and 10,000 variables.
Following~\cite{gu2020learning}, the sample size $m$ in each testing problem is set to 1000.

\subsubsection{Evaluation Metrics}
The widely-used F1 score and SHD are adopted as the metrics for assessing learning accuracy.
In line with previous comparative study~\cite{ramsey2017million}, we use two specific variants of F1 score, i.e., F1 Arrowhead ($\text{F1}^{\rightarrow}$) that considers direction and F1 Adjacent ($\text{F1}^{-}$) that ignores direction, since some methods (PC-Stable and fGES) output MECs of DAGs which may contain edges without directions. 
Specifically, considering M1 as the true MEC of the DAG and M2 as the estimated MEC, the $\text{F1}^{-}$ score is computed as the harmonic mean of precision and recall, i.e.,  2TP/(2TP+FP+FN), where TP is the count of adjacencies shared by M1 and M2, FP is the number of adjacencies in M2 but absent in M1, and FN is the number of adjacencies in M1 but not in M2.
The $\text{F1}^{\rightarrow}$ score is calculated analogously.
An arrowhead from variable A to B is considered shared between M1 and M2 if A → B exists in both.
Conversely, an arrowhead is condered present in one MEC but not he other if, for variables A and B, A → B is in one MEC while A ← B, A–B (undirected), or no adjacency between A and B exists in the other.
The SHD is defined as the number of edge insertions, deletions or flips in order to transform the learned DAG to the ground truth.
Moreover, the wall-clock runtime of the methods is reported.
For F1 metrics, a higher value is better; for SHD and runtime, a lower value is better.

\begin{table}[tbp]
    \centering
    \caption{Preliminary testing results. ``-'' means not returning solutions within a time budget of 3600s.}
      {
      \begin{tabular}{ccc}
      \toprule
      Method & $\text{F1}^{-}$ & runtime (s) \\
      \midrule
      PC-Stable & 0.71  & 576.21 \\
      \midrule
      fGES  & 0.66  & 625.86 \\
      \midrule
      HC    & 0.528 & 3170.22 \\
      \midrule
      TABU  & 0.528 & 3181.777 \\
      \midrule
      CCDr  & 0.392 & 834.781 \\
      \midrule
      MMHC  & 0.331 & 1210.805 \\
      \midrule
      RSMAX2 & 0.326 & 1416.733 \\
      \midrule
      GOLEM & 0.32  & 3622.433 \\
      \midrule
      IAMB-FDR & 0.271 & 2021.871 \\
      \midrule
      Inter-IAMB & 0.189 & 2865.472 \\
      \midrule
      IAMB  & 0.157 & 2943.536 \\
      \midrule
      Fast-IAMB & 0.107 & 3351.702 \\
      \midrule
      NOTEARS & 0.099 & 3604.744 \\
      \midrule
      GS    & -     & 3600.32 \\
      \midrule
      H2PC  & -     & 3610.257 \\
      \bottomrule
      \end{tabular}}
  	\label{tab:preliminary_testing}
  \end{table}  
\begin{table}[ht]
	\centering
		\caption{The SLE learned by Auto-SLE.}
	{
		\begin{tabular}{cccc@{}}
			\toprule
			\makecell{Member\\Algorithm} & \makecell{Candidate\\Algorithm} &\makecell{ Parameter\\Values}                                                  \\ \midrule
			1  & fGES & \makecell[lt]{$\lambda = 5.87273, m = 185$,\\without faithfulness assumption\vspace{0.15cm}} \\
			2  & fGES & \makecell[lt]{$\lambda = 20.6045, m = 17$,\\without faithfulness assumption\vspace{0.15cm}} \\
			3  & fGES & \makecell[lt]{$\lambda = 2.53767, m = 91$,\\without faithfulness assumption\vspace{0.15cm}}  \\
			4  & fGES & \makecell[lt]{$\lambda = 5.62460, m = 11$,\\without faithfulness assumption\vspace{0.15cm}}  \\ \bottomrule
		\end{tabular}
	}
	\label{table:greedyae}
\end{table}
\begin{table}[ht]
    \centering
      \caption{Default SLE.}
  {
        \begin{tabular}{ccll@{}}
            \toprule
            \makecell{Member\\Algorithm} & \makecell{Candidate\\Algorithm} & \makecell{Parameter\\Values}                                                  \\ \midrule
            1  & PC-Stable & \makecell[lt]{$\alpha = 0.05$, $m = 1000$\vspace{0.15cm}}                                        \\
            2  & fGES      & \makecell[lt]{$\lambda = 1.0$, $m = 1000$\\without faithfulness assumption\vspace{0.15cm}}             \\
            3  & PC-Stable & \makecell[lt]{$\alpha = 0.08399$, $m = 850$\vspace{0.15cm}}                          \\
            4  & fGES      & \makecell[lt]{$\lambda = 797.255$, $m = 871$\\without faithfulness assumption\vspace{0.15cm}} \\ \bottomrule
        \end{tabular}
   }
   	    \label{table:defaultae}
\end{table}
\begin{table}[ht]
    \centering
        \caption{Random SLE.}
   {
        \begin{tabular}{ccll@{}}
            \toprule
            \makecell{Member\\Algorithm} & \makecell{Candidate\\Algorithm} & \makecell{Parameter\\Values}                                                   \\ \midrule
            1  & PC-Stable & \makecell[lt]{$\alpha = 0.08399$, $m = 850$\vspace{0.15cm}}                         \\
            2  & fGES      & \makecell[lt]{$\lambda = 797.255$, $m = 871$\\without faithfulness assumption\vspace{0.15cm}} \\
            3  & PC-Stable & \makecell[lt]{$\alpha = 0.10745$, $m = 980$\vspace{0.15cm}}                          \\
            4  & fGES      & \makecell[lt]{$\lambda = 792.835$, $m = 456$\\ with faithfulness assumption\vspace{0.15cm}}   \\ \bottomrule
        \end{tabular}
   }
    \label{table:randomae}
\end{table}

\subsubsection{Learning SLE with Auto-SLE}
A diverse training problem set is beneficial for learning SLEs with good performance across problem sizes and network characteristics.
Specifically, we randomly select a base network from the 32 networks in \textit{bnlearn} and uses the afore-mentioned generation approach to generate training problems.
The training set $T$ comprises 100 problems, with variable numbers ranging from 5 to 1000.
These problems are exclusively used for learning the SLE and are independent of the testing problems. 

For the algorithm configuration space $\bTheta$, we consider two algorithms PC-stable (with two parameters) and fGES (with three parameters) as candidate algorithms because they outperform other existing methods in our preliminary testing (see Section~\ref{sec:baseline_setting}).
Details of these methods are described below.
\begin{itemize}
\item PC-Stable~\cite{colombo2014order} with two parameters: significance threshold of CI tests within the interval $\alpha \in \left[0.01, 0.2\right]$ and the search's maximum depth interval $m \in \left[1, 1000\right]$.
\item fGES~\cite{ramsey2017million} with three parameters: structural penalty of the BIC score within interval $\lambda \in \left[1.0, 1000.0\right]$, the maximum number of parents for a single node during the search process within interval $m \in \left[1, 1000\right]$, and the option to use the faithfulness assumption or not. 
\end{itemize}

We use the implementations of PC-Stable and fGES from the causal discovery tool box \textit{TETRAD}~\cite{ramsey2018tetrad}.
The sum of $\text{F1}^{-}$ and $\text{F1}^{\rightarrow}$ is adopted as the quality measure $Q$, and ensemble size $k$ is set to $4$ as running more iterations of Auto-SLE brings a negligible improvement (smaller than 0.1)  to the SLE's performance on the training set.
The Bayesian optimization tool SMAC (version 3)~\cite{lindauer2022smac3} is used to maximize $\Delta(\theta|\mathbb{A})$ in each iteration of Auto-SLE, with a time budget 12 hours per run.
Consequently, Auto-SLE consumes approximately 48 hours in total to learn the SLE, which is detailed in Table~\ref{table:greedyae}.
It is interesting to find that the SLE only contains fGES, meaning PC-Stable has not defeated fGES in the learning process.
Then, the SLE is integrated into PEF, resulting in P/SLE, which is evaluated in subsequent experiments without further tuning or adaptation.


\subsubsection{Preliminary Testing and Selection of Baselines}
\label{sec:baseline_setting}
Given the rich literature on BN structure learning~\cite{kitson2023survey}, a preliminary testing of existing methods is crucial to assess their scalability and identify state-of-the-art methods in learning large BNs, thereby enabling the selection of appropriate baselines for our experiments.
For this purpose, we collect 15 existing methods, including score-based, constraint-based, and hybrid methods, and conduct a testing of them.
These methods are listed below.
\begin{itemize}
	\item Score-based combinatorial search methods: HC~\cite{chickering2004large}, TABU~\cite{bouckaert1995bayesian}, CCDr~\cite{aragam2015concave}, fGES~\cite{ramsey2017million}
	\item Score-based continuous optimization methods: NOTEARS~\cite{zheng2018dags}, GOLEM~\cite{zhu2021efficient}
	\item Constraint-based methods: PC-Stable~\cite{colombo2014order}, GS~\cite{margaritis2003learning}, IAMB~\cite{tsamardinos2003algorithms}, Fast-IAMB~\cite{tsamardinos2003algorithms}, IAMB-FDR~\cite{pena2008learning}, Inter-IAMB~\cite{yaramakala2005speculative}
	\item Hybrid methods: MMHC~\cite{tsamardinos2006max}, RSMAX2~\cite{friedman2013learning}, H2PC~\cite{gasse2014hybrid}
\end{itemize}
Most of the implementations are collected from the \textit{bnlearn}~\cite{scutari2010learning} repository~\footnote{\url{https://www.bnlearn.com/bnrepository}};
CCDr~\footnote{\url{https://github.com/itsrainingdata/ccdrAlgorithm}}, NOTEARS~\footnote{\url{https://github.com/xunzheng/notears}}, GOLEM~\footnote{\url{https://github.com/ignavierng/golem}} are collected from Github;
fGES and PC-Stable are collected from \textit{TETRAD}~\cite{ramsey2018tetrad}~\footnote{\url{https://www.ccd.pitt.edu/tools}}.

For preliminary testing, we generate testing problems based on two random graph models, Erdös-Rényi (ER)~\cite{erdHos1960evolution} and scale-free (SF)~\cite{barabasi2003scale}, where the edge number is set to be two times the node number.
Specifically, each testing problem involves 1000 variables, has Gaussian noise, and the observation number $m=1000$.
Based on each graph model, 10 different testing problems are generated, resulting in a total of 20 testing problems.
Each method is then applied to these testing problems, with a runtime limit of 3600 seconds for each problem.
Table~\ref{tab:preliminary_testing} presents the average performance of these methods on all 20 testing problems, measured by $\text{F1}^{-}$ and runtime.
The results indicate that fGES and PC-Stable are the top-performing methods, consistently maintaining F1 scores above 0.5 when the variable number reaches 1000.
While HC and TABU also achieve F1 scores above 0.5, their runtime is significantly longer, making them impractical for testing problems involving 10,000 variables.

\begin{table*}[tbp]
	\centering
	\caption{Results on testing problems with 1000 variables, in terms of F1 Adjacent ($\text{F1}^{-}$), F1 Arrowhead ($\text{F1}^{\rightarrow}$), SHD, and runtime (T).
		On each network, the mean ± std performance obtained by each method on 10 problems is reported.
		The best performance in terms of accuracy metrics is marked with an \underline{underline}, and the performance that is not significantly different from the best performance (according to a Wilcoxon signed-rank test with significance level $p=0.05$) is indicated in \textbf{bold}.
		Let B be the best performance achieved among the baselines and A be the performance of P/SLE.
		The improvement (Impro.) ratio is calculated as (A-B)/B for F1 metrics (a higher value is better), and is calculated as (B-A)/B for SHD (a lower value is better).}
	\resizebox{1.0\textwidth}{!}{
		\begin{tabular}{@{}llcccccccccc@{}}
			\toprule
			\multicolumn{1}{c}{Problem}               &       & Alarm                     & Asia                      & Cancer                   & Child                     & Earthquake               & Hailfinder                & Healthcare                & Mildew                     & Pigs                       & Survey                    \\
			\multicolumn{1}{c}{($|V|, |E|$)}           &  & (1036,1417)               & (1000,1100)               & (1000,880)               & (1000,1375)               & (1000,880)               & (1008,1307)               & (1001,1416)               & (1015,1468)                & (1323,1954)                & (1002,1103)               \\ \midrule
			\multirow{4}{*}{P/SLE}           & $\text{F1}^{-}$  & \underline{\textbf{0.81±0.02}}  & \underline{\textbf{0.96±0.00}}  & \underline{\textbf{0.98±0.00}} & \textbf{0.86±0.01}        & \underline{\textbf{0.98±0.00}} & \underline{\textbf{0.80±0.02}}  & \textbf{0.89±0.01}                 & \underline{\textbf{0.68±0.02}}   & \underline{\textbf{0.70±0.02}}            & \underline{\textbf{0.92±0.01}}  \\
			& $\text{F1}^{\rightarrow}$ & \underline{\textbf{0.68±0.04}}  & \underline{\textbf{0.74±0.01}}  & \underline{\textbf{0.94±0.01}} & \underline{\textbf{0.60±0.02}}  & \underline{\textbf{0.94±0.01}} & \underline{\textbf{0.61±0.03}}  & 0.59±0.01           & \underline{\textbf{0.54±0.03}}   & \underline{\textbf{0.61±0.02}}   & \underline{\textbf{0.79±0.02}}  \\
			& SHD          & \underline{\textbf{625.7±73.6}} & \underline{\textbf{125.2±15.2}} & \underline{\textbf{43.1±6.9}}  & \underline{\textbf{581.1±23.3}} & \underline{\textbf{46.1±8.1}}  & \underline{\textbf{571.3±62.7}} & \underline{\textbf{485.0±38.4}} & \underline{\textbf{1135.4±71.8}} & \underline{\textbf{1262.8±84.4}} & \underline{\textbf{309.9±29.8}} \\
			& T (s)      & 7.4±0.2                   & 5.7±0.1                   & 5.1±0.1                  & 8.4±0.3                   & 5.1±0.1                  & 243.3±249.0               & 6.4±0.2                   & 8.7±0.6                    & 726.9±1176.6               & 6.0±0.1                   \\ \midrule
			\multirow{4}{*}{P/SLE(D)} & $\text{F1}^{-}$  & 0.68±0.01                 & 0.78±0.01                 & 0.74±0.01                & 0.75±0.01                 & 0.74±0.01                & 0.66±0.01                 & 0.77±0.01                 & 0.56±0.02                  & 0.53±0.01                  & 0.75±0.01                 \\
			& $\text{F1}^{\rightarrow}$ & 0.57±0.03                 & 0.64±0.01                 & 0.73±0.01                & 0.56±0.01                 & 0.72±0.01                & 0.57±0.02                 & 0.56±0.01                 & 0.45±0.03                  & 0.47±0.02                  & 0.62±0.02                 \\
			& SHD          & 1234.3±77.9               & 710.5±24.7                & 611.1±31.2               & 1055.5±39.4               & 617.5±27.1               & 1158.4±65.2               & 1051.6±31.0               & 1826.2±100.4               & 2442.4±115.6               & 871.6±32.0                \\
			& T (s)      & 26.0±1.8                  & 20.4±3.0                  & 17.1±2.7                 & 21.6±5.0                  & 15.4±2.3                 & 366.0±287.4               & 9.7±1.8                   & 14.2±2.2                   & 5697.5±1870.6              & 7.8±0.1                   \\ \midrule
			\multirow{4}{*}{P/SLE(R)}  & $\text{F1}^{-}$  & 0.72±0.02                 & 0.87±0.00                 & 0.77±0.01                & 0.83±0.01                 & 0.77±0.01                & 0.66±0.01                 & 0.87±0.01                 & 0.57±0.02                  & 0.58±0.04                  & 0.84±0.01                 \\
			& $\text{F1}^{\rightarrow}$ & 0.42±0.02                 & 0.49±0.01                 & 0.32±0.03                & 0.51±0.02                 & 0.30±0.02                & 0.44±0.02                 & 0.44±0.01                 & 0.31±0.01                  & 0.35±0.03                  & 0.46±0.01                 \\
			& SHD          & 1070.1±56.9               & 511.7±17.2                & 697.1±44.0               & 680.7±34.7                & 703.7±34.5               & 1152.3±38.7               & 750.8±31.6                & 1647.1±60.2                & 1934.3±60.6                & 675.9±28.1                \\
			& T (s)      & 19.6±1.1                  & 12.2±1.9                  & 11.4±1.9                 & 13.7±1.7                  & 10.4±1.4                 & 33.3±8.3                  & 12.9±3.1                  & 15.5±1.8                   & 5886.7±1671.0              & 8.3±2.0                   \\ \midrule
			\multirow{4}{*}{P/PC-Stable}    & $\text{F1}^{-}$  & 0.76±0.02                 & 0.91±0.00                 & 0.85±0.01                & 0.85±0.01                 & 0.85±0.01                & 0.71±0.01                 & \underline{\textbf{0.89±0.01}}                 & 0.62±0.02                  & 0.62±0.05                  & 0.89±0.01                 \\
			& $\text{F1}^{\rightarrow}$ & 0.48±0.02                 & 0.56±0.01                 & 0.47±0.03                & 0.52±0.02                 & 0.43±0.02                & 0.48±0.02                 & 0.46±0.01                 & 0.35±0.02                  & 0.40±0.04                  & 0.58±0.03                 \\
			& SHD          & 883.5±42.2                & 342.7±9.5                 & 414.8±23.8               & 585.3±29.1                & 416.8±16.4               & 922.3±41.1                & 648.0±20.3                & 1382.5±42.3                & 1627.4±84.8                & 458.0±26.2                \\
			& T (s)      & 5.5±0.8                   & 6.1±0.1                   & 6.0±0.1                  & 6.2±0.9                   & 5.7±0.4                  & 19.1±6.3                  & 2.8±0.2                   & 6.6±0.6                    & 4806.5±2311.3              & 5.3±0.8                   \\ \midrule
			\multirow{4}{*}{P/fGES}          & $\text{F1}^{-}$  & 0.68±0.01                 & 0.78±0.01                 & 0.74±0.01                & 0.75±0.01                 & 0.74±0.01                & 0.66±0.01                 & 0.77±0.01                 & 0.56±0.02                  & 0.53±0.01                  & 0.75±0.01                 \\
			& $\text{F1}^{\rightarrow}$ & 0.57±0.03                 & 0.64±0.01                 & 0.73±0.01                & 0.56±0.01                 & 0.72±0.01                & 0.57±0.02                 & 0.56±0.01                 & 0.45±0.03                  & 0.47±0.02                  & 0.62±0.02                 \\
			& SHD          & 1234.3±77.9               & 710.5±24.7                & 611.1±31.2               & 1055.5±39.4               & 617.5±27.1               & 1158.4±65.2               & 1051.6±31.0               & 1826.2±100.4               & 2442.4±115.6               & 871.6±32.0                \\
			& T (s)      & 9.3±1.1                   & 9.0±0.4                   & 8.1±0.4                  & 9.0±1.3                   & 7.9±0.4                  & 342.6±283.8               & 4.4±0.2                   & 11.7±1.7                   & 570.3±1084.5               & 6.9±0.6                   \\ \midrule
			\multirow{4}{*}{PC-Stable}        & $\text{F1}^{-}$  & 0.72±0.01                 & 0.73±0.01                 & 0.54±0.01                & 0.82±0.01                 & 0.54±0.01                & 0.67±0.01                 & 0.80±0.01                 & 0.61±0.01                  & 0.45±0.36                  & 0.68±0.01                 \\
			& $\text{F1}^{\rightarrow}$ & 0.50±0.01                 & 0.37±0.01                 & 0.18±0.00                & 0.53±0.02                 & 0.25±0.00                & 0.38±0.01                 & \underline{\textbf{0.61±0.01}}        & 0.23±0.01                  & 0.29±0.23                  & 0.44±0.01                 \\
			& SHD          & 1256.9±26.9               & 1189.2±37.0               & 1771.5±42.8              & 792.2±36.3                & 1700.1±49.5              & 1520.4±42.6               & 724.8±24.7                & 2144.2±47.3                & 1718.2±41.7                & 1165.8±40.2               \\
			& T (s)      & 383.4±65.6                & 167.7±14.5                & 176.4±19.8               & 309.2±57.3                & 165.9±12.4               & 453.8±103.8               & 172.6±14.6                & 569.4±150.5                & 43121.0±35902.9            & 169.7±15.3                \\ \midrule
			\multirow{4}{*}{fGES}              & $\text{F1}^{-}$  & 0.57±0.01                 & 0.51±0.00                 & 0.43±0.00                & 0.55±0.00                 & 0.43±0.00                & 0.47±0.01                 & 0.58±0.00                 & 0.57±0.01                  & 0.53±0.00                  & 0.50±0.00                 \\
			& $\text{F1}^{\rightarrow}$ & 0.49±0.02                 & 0.42±0.01                 & 0.42±0.00                & 0.42±0.01                 & 0.42±0.01                & 0.42±0.01                 & 0.43±0.01                 & 0.48±0.02                  & 0.48±0.01                  & 0.40±0.01                 \\
			& SHD          & 2226.1±48.7               & 2301.3±17.7               & 2320.3±16.1              & 2337.9±29.4               & 2326.9±20.0              & 2512.1±37.0               & 2331.5±25.1               & 2259.6±49.2                & 3143.3±51.1                & 2396.8±22.3               \\
			& T (s)      & 769.1±104.8               & 823.2±26.6                & 787.0±19.9               & 907.6±35.2                & 813.5±28.0               & 689.4±44.5                & 832.4±27.6                & 646.5±28.6                 & 1257.0±74.2                & 770.2±22.8                \\ \midrule
			\multirow{3}{*}{Impro. ratio}       & $\text{F1}^{-}$  & 7.4\%                     & 4.7\%                     & 14.6\%                   & 0.2\%                     & 14.6\%                   & 13.2\%                    & -0.1\%                    & 10.4\%                     & 12.5\%                     & 3.2\%                     \\
			& $\text{F1}^{\rightarrow}$ & 19.0\%                    & 16.3\%                    & 29.5\%                   & 6.8\%                     & 30.0\%                   & 6.7\%                     & -3.3\%                    & 12.3\%                     & 28.2\%                     & 26.0\%                    \\
			& SHD          & 29.2\%                    & 63.5\%                    & 89.6\%                   & 0.7\%                     & 88.9\%                   & 38.1\%                    & 25.2\%                    & 17.9\%                     & 22.4\%                     & 32.3\%                    \\ \bottomrule
		\end{tabular}
	}
	\label{tab:results_1000}
\end{table*}

Therefore, we choose fGES and PC-Stable as baselines in the comparison experiments.
Besides, fGES and PC-Stable are also integrated into the estimation step of PEF, resulting in two new baselines: P/fGES and P/PC-Stable.
Furthermore, to validate the effectiveness of Auto-SLE, we consider two alternative SLE learning approaches:
(i) default SLE, which contains the default fGES and PC-Stable, as well as variants with randomly chosen parameter values for each of them;
(ii) random SLE, which contains two variants with randomly chosen parameter values for each of fGES and PC-Stable.
Both of these SLEs consist of four member algorithms (same as our learned SLE) and are integrated into PEF, yielding two baselines P/SLE(D) and P/SLE(R).
According to the causal discovery toolbox \textit{TETRAD}~\cite{ramsey2018tetrad}, the default parameter values for PC-Stable is: $\alpha =0.05$ and $m=1000$; the default parameter values for fGES is $\lambda=1.0$, $m=1000$, without faithfulness assumption.
Details of the default SLE and random SLE are shown in Table~\ref{table:defaultae} and Table~\ref{table:randomae}.

To prevent the compared methods from running prohibitively long, a runtime limit of 24 hours is set on each testing problem.

\begin{table*}[htbp]
	\centering
	\caption{Results on problems with 10,000 variables.
		``-'' means a solution is not found within the budget of 24 hours.
		The best performance in terms of accuracy metrics is indicated in \textbf{bold}.}
		\resizebox{1.0\textwidth}{!}{
		\begin{tabular}{@{}lccccccccccc@{}}
			\toprule
			\multicolumn{1}{c}{Problem}             &         & Alarm               & Asia                & Cancer              & Child               & Earthquake          & Hailfinder          & Healthcare          & Mildew               & Pigs                & Survey              \\
			\multicolumn{1}{c}{($|V|, |E|$)}      &       & (10027, 13713)      & (10000, 11000)      & (10000, 8800)       & (10000, 13750)      & (10000, 8800)       & (10024, 12996)      & (10003, 14148)      & (10010, 14472)       & (10143, 14978)      & (10002, 11003)      \\ \midrule
			\multirow{4}{*}{P/SLE}           & $\text{F1}^{-}$  & {\textbf{0.80}} & {\textbf{0.94}} & {\textbf{0.96}} & {\textbf{0.85}} & {\textbf{0.96}} & {\textbf{0.81}} & {\textbf{0.88}} & {\textbf{0.69}}  & {\textbf{0.76}} & {\textbf{0.90}} \\
			& $\text{F1}^{\rightarrow}$ & {\textbf{0.66}} & {\textbf{0.72}} & {\textbf{0.92}} & {\textbf{0.61}} & {\textbf{0.92}} & {\textbf{0.66}} & {\textbf{0.59}} & {\textbf{0.55}}  & {\textbf{0.69}} & {\textbf{0.77}} \\
			& SHD          & {\textbf{6366}} & {\textbf{1717}} & {\textbf{705}}  & {\textbf{6017}} & {\textbf{683}}  & {\textbf{5400}} & {\textbf{5361}} & {\textbf{10911}} & {\textbf{7468}} & {\textbf{3334}} \\
			& T (s)      & 95.5                & 35.6                & 24.4                & 80.8                & 24.2                & 828.6               & 50.0                & 360.3                & 2120.5              & 39.2                \\ \midrule
			\multirow{4}{*}{P/SLE(D)} & $\text{F1}^{-}$  & 0.34                & 0.37                & 0.29                & 0.42                & 0.31                & 0.34                & 0.43                & 0.30                 & 0.33                & 0.35                \\
			& $\text{F1}^{\rightarrow}$ & 0.28                & 0.30                & 0.28                & 0.32                & 0.30                & 0.30                & 0.31                & 0.23                 & 0.29                & 0.28                \\
			& SHD          & 42805               & 36831               & 40464               & 35660               & 37413               & 40151               & 37297               & 50139                & 41958               & 39476               \\
			& T (s)      & 4084.4              & 2754.9              & 3308.6              & 3383.0              & 2810.6              & 3865.9              & 3739.6              & 4532.6               & 7040.1              & 2931.4              \\ \midrule
			\multirow{4}{*}{P/SLE(R)}  & $\text{F1}^{-}$  & 0.49                & 0.59                & 0.42                & 0.69                & 0.43                & 0.45                & 0.71                & 0.38                 & 0.02                & 0.55                \\
			& $\text{F1}^{\rightarrow}$ & 0.25                & 0.25                & 0.15                & 0.41                & 0.14                & 0.28                & 0.28                & 0.19                 & 0.01                & 0.19                \\
			& SHD          & 23673               & 18141               & 26652               & 12844               & 26065               & 25024               & 15483               & 32409                & 15551               & 21367               \\
			& T (s)      & 676.8               & 386.7               & 667.9               & 343.2               & 538.6               & 628.6               & 319.6               & 5563.9               & 7048.2              & 379.0               \\ \midrule
			\multirow{4}{*}{P/PC-Stable}    & $\text{F1}^{-}$  & 0.59                & 0.71                & 0.54                & 0.77                & 0.55                & 0.55                & 0.80                & 0.44                 & 0.12                & 0.66                \\
			& $\text{F1}^{\rightarrow}$ & 0.31                & 0.32                & 0.20                & 0.46                & 0.17                & 0.34                & 0.34                & 0.23                 & 0.06                & 0.26                \\
			& SHD          & 16842               & 11262               & 17280               & 8932                & 16869               & 17125               & 10815               & 24847                & 16005               & 14046               \\
			& T (s)      & 210.1               & 145.0               & 147.3               & 145.2               & 133.3               & 219.0               & 156.8               & 508.7                & 7005.0              & 125.4               \\ \midrule
			\multirow{4}{*}{P/fGES}          & $\text{F1}^{-}$  & 0.34                & 0.37                & 0.29                & 0.42                & 0.31                & 0.34                & 0.43                & 0.30                 & 0.33                & 0.35                \\
			& $\text{F1}^{\rightarrow}$ & 0.28                & 0.30                & 0.28                & 0.32                & 0.30                & 0.30                & 0.31                & 0.23                 & 0.29                & 0.28                \\
			& SHD          & 42805               & 36831               & 40464               & 35660               & 37413               & 40151               & 37297               & 50139                & 41958               & 39476               \\
			& T (s)      & 4037.6              & 2796.0              & 3331.4              & 3369.7              & 2833.4              & 3669.7              & 3755.3              & 4439.0               & 4179.0              & 2905.7              \\ \midrule
			\multirow{4}{*}{PC-Stable}        & $\text{F1}^{-}$  & -                   & -                   & -                   & -                   & -                   & -                   & -                   & -                    & -                   & -                   \\
			& $\text{F1}^{\rightarrow}$ & -                   & -                   & -                   & -                   & -                   & -                   & -                   & -                    & -                   & -                   \\
			& SHD          & -                   & -                   & -                   & -                   & -                   & -                   & -                   & -                    & -                   & -                   \\
			& T (s)      & 86400.0             & 86400.0             & 86400.0             & 86400.0             & 86400.0             & 86400.0             & 86400.0             & 86400.0              & 86400.0             & 86400.0             \\ \midrule
			\multirow{4}{*}{fGES}              & $\text{F1}^{-}$  & -                   & -                   & -                   & -                   & -                   & -                   & -                   & -                    & -                   & -                   \\
			& $\text{F1}^{\rightarrow}$ & -                   & -                   & -                   & -                   & -                   & -                   & -                   & -                    & -                   & -                   \\
			& SHD          & -                   & -                   & -                   & -                   & -                   & -                   & -                   & -                    & -                   & -                   \\
			& T (s)      & 86400.0             & 86400.0             & 86400.0             & 86400.0             & 86400.0             & 86400.0             & 86400.0             & 86400.0              & 86400.0             & 86400.0             \\ \midrule
			\multirow{3}{*}{Impro. ratio}       & $\text{F1}^{-}$  & 36.0\%              & 31.4\%              & 78.8\%              & 10.8\%              & 75.7\%              & 47.6\%              & 10.4\%              & 54.5\%               & 133.0\%             & 36.0\%              \\
			& $\text{F1}^{\rightarrow}$ & 115.6\%             & 127.2\%             & 225.4\%             & 33.9\%              & 208.0\%             & 94.7\%              & 70.9\%              & 135.0\%              & 135.8\%             & 180.0\%             \\
			& SHD          & 62.2\%              & 84.8\%              & 95.9\%              & 32.6\%              & 96.0\%              & 68.5\%              & 50.4\%              & 56.1\%               & 52.0\%              & 76.3\%             \\ \bottomrule
		\end{tabular}
	}
	\label{tab:results_10000}
\end{table*}

\subsection{Results and Analysis}
\label{sec:results}
The partition step of PEF typically results in subproblems with  5\%$\sim$10\% variables of the original problem.
For testing problems with 1000 and 10,000 variables, the subproblems have 50$\sim$100 and 500$\sim$1000 variables, respectively, which are problem sizes covered by training data.
However, as the number of variables increases further (e.g., to 30,000), the subproblems will be much larger than the training problems.

We first examine the performance on testing problems with around 1000 and 10,000 variables to answer RQ1 raised at the beginning of this section.
Specifically, based on each network selected from \textit{bnlearn}, we generate 10 testing problems (with different random seeds), test each method on these problems, and report the mean ± standard deviation in terms of the evaluation metrics as well as statistical test results in Table~\ref{tab:results_1000}.
For the performance evaluation involving 10,000 variables, due to the very long runtime of the baselines, we generate one test problem based on each network and report the testing results in Table~\ref{tab:results_10000}.

The first observation from Table~\ref{tab:results_1000} is that P/SLE consistently achieves significantly higher accuracy across all three metrics compared to the baselines, except for the F1 metrics on Healthcare testing problems, where P/SLE performs slightly  worse than PC-Stable.
Table~\ref{tab:results_10000} shows that the superiority of P/SLE becomes more pronounced on testing problems with 10,000 variables.
In call cases it achieves substantially higher accuracy than all baselines across all accuracy metrics.
Notably, compared to the best performance achieved by the baselines, P/SLE often achieves improvements in $\text{F1}^{-}$ of over 30\% and up to 133\%, and improvements in $\text{F1}^{\rightarrow}$ and SHD of over 50\% and even up to 225\%.
Since the difference between P/SLE and P/PC-Stable (P/fGES) lies in the use of a SLE in the estimation step instead of a single algorithm, the consistent advantages of P/SLE over them confirm that using SLEs can stably achieve high learning accuracy across subproblems.
On the other hand, we also observe that P/SLE(D) and P/fGES obtain identical learning accuracy.
This is because, in the default ensemble, the output of fGES always has the best BIC score among all member algorithms, thus making it consistently being chosen as the final output.
While P/SLE(R), with a randomly constructed SLE, can avoid this issue, it fails to achieve satisfactory learning accuracy, which in some cases is even worse than P/PC-Stable and P/fGES that do not use SLEs.
Therefore, the advantages of P/SLE over P/SLE(D) and P/SLE(R) highlight the effectiveness of Auto-SLE in learning high-quality SLEs with complementary member algorithms.

\begin{figure*}[tbp]
	\centering
	\includegraphics[width=\textwidth, trim=0cm 0cm 0cm 0cm, clip]{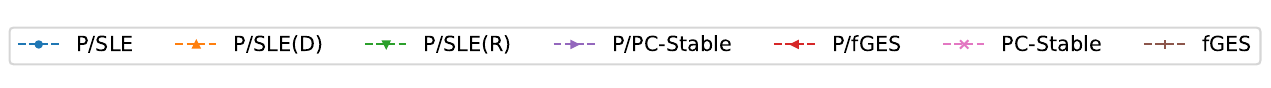}
	\vspace{-10pt}
	\subfloat[]{\includegraphics[width=.245\textwidth]{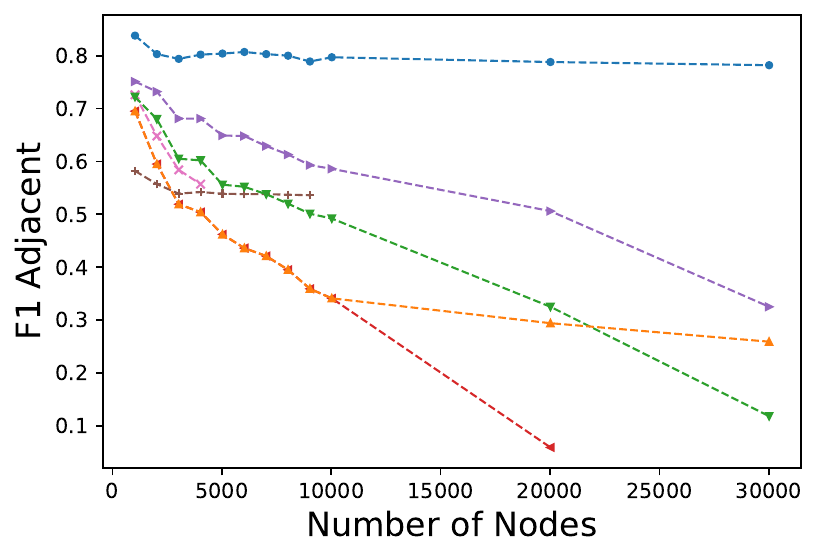}}
	\subfloat[]{\includegraphics[width=.245\textwidth]{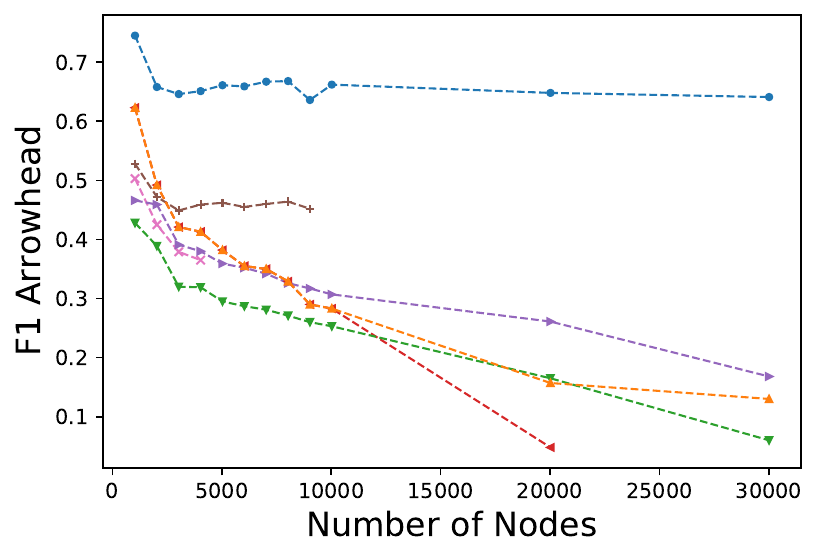}}
	\subfloat[]{\includegraphics[width=.245\textwidth]{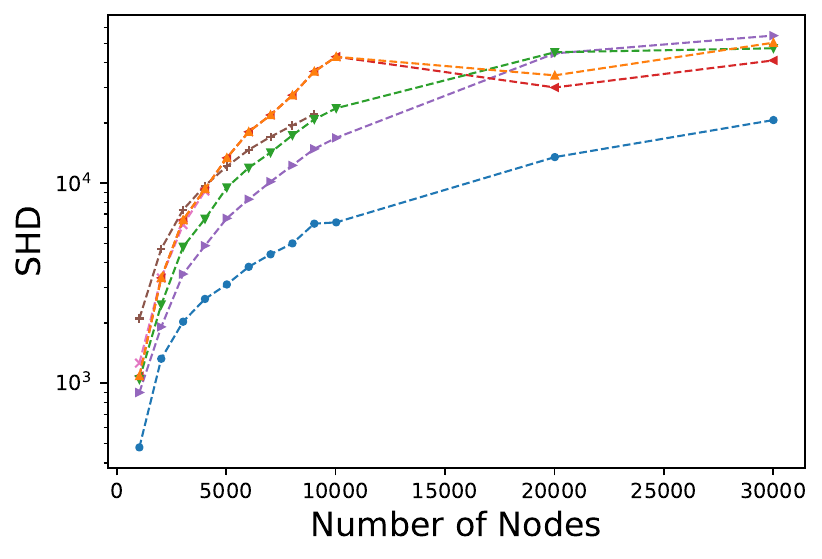}}
	\subfloat[]{\includegraphics[width=.245\textwidth]{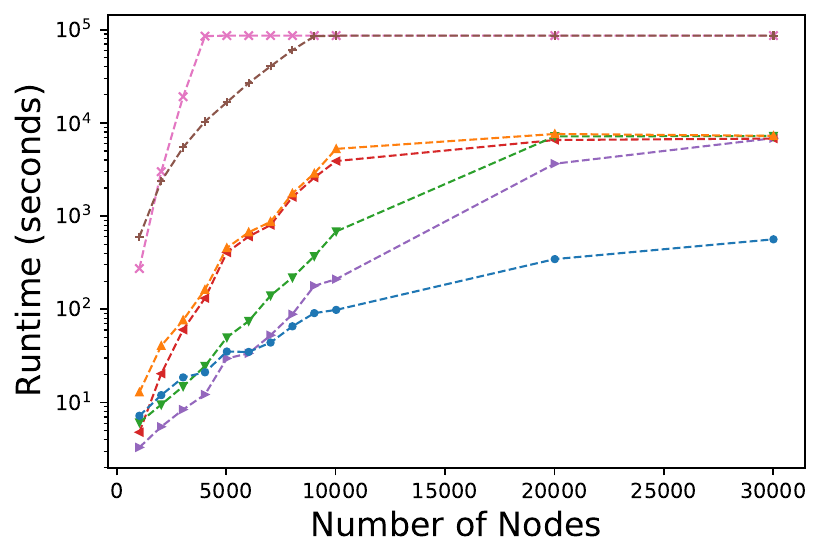}}

	\subfloat[]{\includegraphics[width=.245\textwidth]{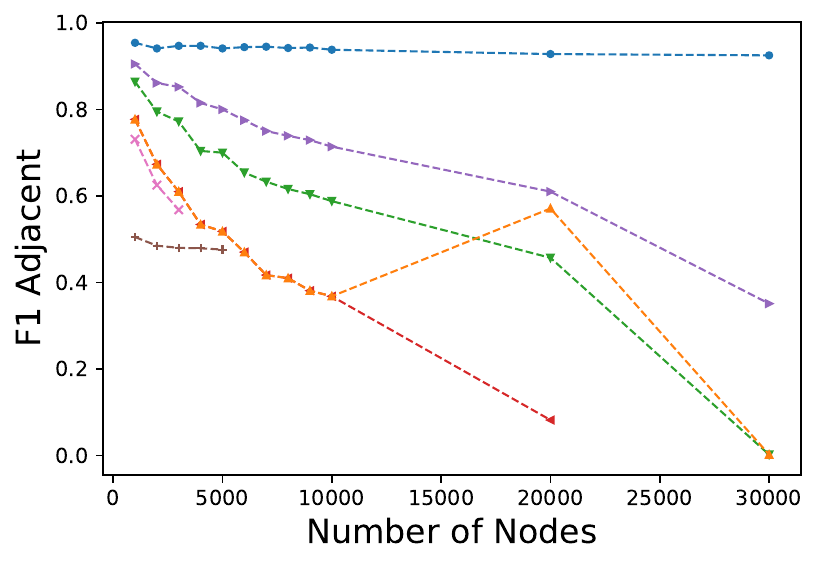}}
	\subfloat[]{\includegraphics[width=.245\textwidth]{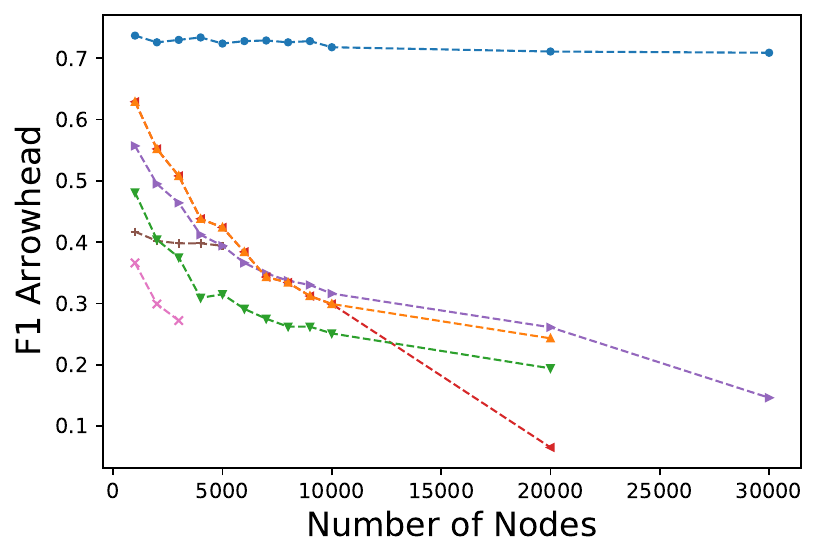}}
	\subfloat[]{\includegraphics[width=.245\textwidth]{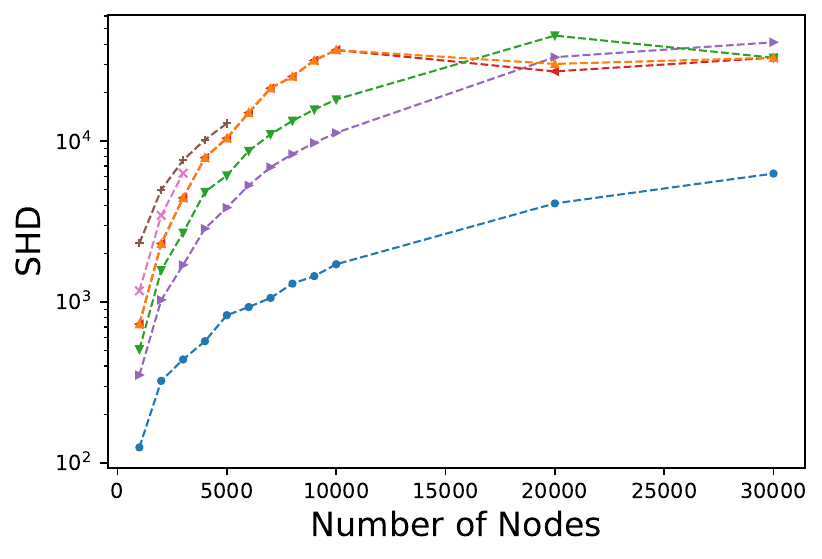}}
	\subfloat[]{\includegraphics[width=.245\textwidth]{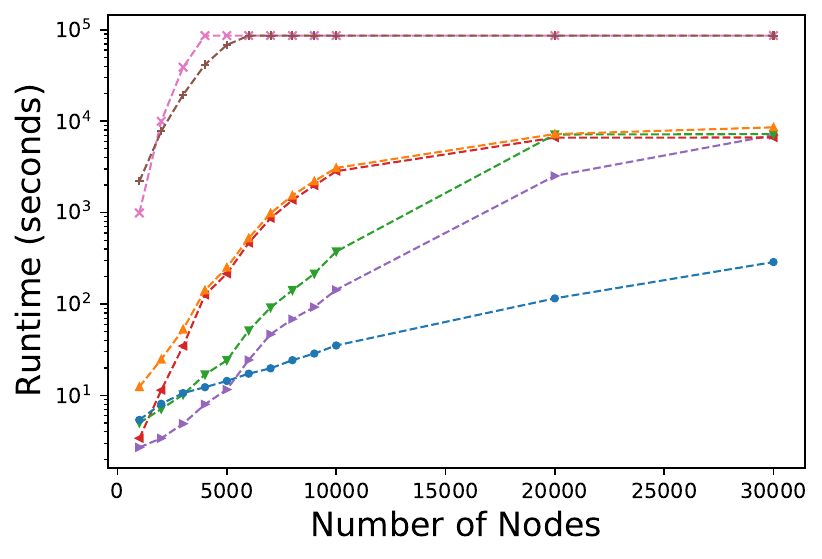}}	
	\caption{Performance curves on Alarm and Asia problems with up to 30,000 variables. SHD and runtime are plotted on log scale. a-d are for Alarm problem, e-h are for Asia problem}
	\label{fig:generalization}
\end{figure*}

\begin{table}
	\centering
	\caption{Testing results on Yeast and WS problems.
		``P/SLE-b'' represents the best performance achieved among P/SLE(R), P/SLE(D), P/fGES, and P/PC-Stable.
		The best performance in terms of accuracy metrics is marked with an \underline{underline}, and the performance that is not significantly different from the best performance is indicated in \textbf{bold}.
	}
	\resizebox{\linewidth}{!}{
		\begin{tabular}{llccccc}
			\toprule
			\multicolumn{1}{c}{Problem}        &              & WS                       & WS        & Yeast           \\
			\multicolumn{1}{c}{($|V|, |E|$)}    &         & (1000, 2000)              & (10000, 20000) & (4441, 12873)       \\ 
			\midrule
			\multirow{4}{*}{P/SLE}           & $\text{F1}^{-}$  & \underline{\textbf{0.80±0.01}} & \underline{\textbf{0.79}}  & \underline{\textbf{0.106}}\\
			& $\text{F1}^{\rightarrow}$ & \textbf{0.51±0.01}       & \underline{\textbf{0.49}}  & \underline{\textbf{0.084}}\\
			& SHD          & \underline{\textbf{1240.2±47.7}}     & \underline{\textbf{12611}} & \underline{\textbf{20912}}\\
			& T (s)      & 9.8±0.3                  & 83.2                 & 7105.4\\ 
			\midrule
			\multirow{4}{*}{P/SLE-b} & $\text{F1}^{-}$  & 0.79±0.01                & 0.72                & 0.003 \\
			& $\text{F1}^{\rightarrow}$ & \underline{\textbf{0.51±0.01}}                & 0.40               & 0.001  \\
			& SHD          & 1252.4±32.2              & 15760                & 33162\\
			& T (s)      & 3.7±0.3                  & 363.1               & 5771.32 \\ 
			\midrule
			\multirow{4}{*}{PC-Stable}        & $\text{F1}^{-}$  & 0.75±0.01                & -                  & -  \\
			& $\text{F1}^{\rightarrow}$ & 0.42±0.01                & -                   & - \\
			& SHD          & 1437.4±21.2              & -                   & - \\
			& T (s)      & 176.2±6.5                & 86400.0             & 86400.0 \\
			\midrule
			\multirow{4}{*}{fGES}              & $\text{F1}^{-}$  & 0.67±0.00                & -                  & 0.066  \\
			& $\text{F1}^{\rightarrow}$ & 0.43±0.01                & -                  & 0.056  \\
			& SHD          & 2401.0±22.3              & -                  & 29847  \\
			& T (s)      & 1433.3±61.4              & 86400.0            & 178257.4  \\
			\midrule
			\multirow{3}{*}{Impro. ratio}       & $\text{F1}^{-}$  & 2.2\%                    & 10.6\%              & 60.6\% \\
			& $\text{F1}^{\rightarrow}$ & -0.6\%                   & 21.1\%              &  50.0\%\\
			& SHD          & 1.0\%                    & 20.0\%              &  29.9\%\\
			\bottomrule
		\end{tabular}
	}
	\label{tab:no_block_results}
\end{table}

\begin{table*}[tbp]
  \centering
  \caption{Testing results on Asia problems with up to 30000 variables, in terms of F1 Adjacent, F1 Arrowhead, SHD, and runtime.
  	On each network, the mean ± std performance obtained by each method on 10 problems is reported.
  	The best performance in terms of accuracy metrics is indicated in \textbf{bold}.}
    \resizebox{1.0\textwidth}{!}{
    \begin{tabular}{cccccccccccccc}
    \toprule
    \multicolumn{2}{c}{Problem} & Asia-1000 & Asia-2000 & Asia-3000 & Asia-4000 & Asia-5000 & Asia-6000 & Asia-7000 & Asia-8000 & Asia-9000 & Asia-10000 & Asia-20000 & Asia-30000 \\
    \multicolumn{2}{c}{(|V|,|E|)} & (1000, 1100) & (2000, 2200) & (3000, 3300) & (4000, 4400) & (5000, 5500) & (6000, 6600) & (7000, 7700) & (8000, 8800) & (9000, 9900) & (10000, 11000) & (20000, 22000) & (30000, 33000) \\
    \midrule
    \multirow{4}[2]{*}{P/SLE} & F1 Adjacent & \textbf{95.4 } & \textbf{94.1 } & \textbf{94.7 } & \textbf{94.7 } & \textbf{94.1 } & \textbf{94.4 } & \textbf{94.5 } & \textbf{94.2 } & \textbf{94.3 } & \textbf{93.8 } & \textbf{92.8 } & \textbf{92.5 } \\
          & F1 Arrowhead & \textbf{73.7 } & \textbf{72.6 } & \textbf{73.0 } & \textbf{73.4 } & \textbf{72.4 } & \textbf{72.8 } & \textbf{72.9 } & \textbf{72.6 } & \textbf{72.8 } & \textbf{71.8 } & \textbf{71.1 } & \textbf{70.9 } \\
          & SHD   & \textbf{125.0 } & \textbf{324.0 } & \textbf{440.0 } & \textbf{571.0 } & \textbf{829.0 } & \textbf{931.0 } & \textbf{1061.0 } & \textbf{1302.0 } & \textbf{1449.0 } & \textbf{1717.0 } & \textbf{4103.0 } & \textbf{6287.0 } \\
          & Runtime & 5.4   & 8.1   & 10.6  & 12.3  & 14.4  & 17.3  & 19.8  & 24.3  & 28.7  & 35.2  & 115.3  & 287.8  \\
    \midrule
    \multirow{4}[2]{*}{P/SLE(D)} & F1 Adjacent & 77.7  & 67.3  & 61    & 53.4  & 51.8  & 47    & 41.7  & 41    & 38.1  & 36.8  & 57.1  & 0.2 \\
          & F1 Arrowhead & 62.9  & 55.2  & 50.8  & 43.8  & 42.4  & 38.4  & 34.3  & 33.4  & 31.2  & 29.9  & 24.3  & / \\
          & SHD   & 729   & 2307  & 4439  & 7890  & 10416 & 15021 & 21308 & 25178 & 31718 & 36831 & 30180 & 32963 \\
          & Runtime & 12.4  & 24.8  & 52.8  & 141.6 & 250.9 & 524.2 & 984.1 & 1538.6 & 2210.7 & 3083  & 7230.6 & 8571.3 \\
\cmidrule{1-1}    \multirow{4}[2]{*}{P/SLE(R)} & F1 Adjacent & 86.4  & 79.5  & 77.2  & 70.4  & 70.0  & 65.4  & 63.3  & 61.6  & 60.4  & 58.8  & 45.7  & 0.2  \\
          & F1 Arrowhead & 48.1  & 40.4  & 37.5  & 30.9  & 31.5  & 29.1  & 27.5  & 26.2  & 26.2  & 25.1  & 19.4  & / \\
          & SHD   & 508.0  & 1577.0  & 2697.0  & 4832.0  & 6110.0  & 8688.0  & 11012.0  & 13366.0  & 15712.0  & 18141.0  & 45260.0  & 32962.0  \\
          & Runtime & 5.0   & 7.1   & 10.2  & 16.9  & 24.2  & 51.4  & 91.5  & 141.2  & 214.5  & 374.8  & 7153.0  & 7220.1  \\
    \midrule
    \multirow{4}[2]{*}{P/PC-Stable} & F1 Adjacent & 90.5  & 86.1  & 85.2  & 81.5  & 80.0  & 77.5  & 75.0  & 73.9  & 72.9  & 71.4  & 61.0  & 35.1  \\
          & F1 Arrowhead & 55.7  & 49.5  & 46.4  & 41.2  & 39.4  & 36.6  & 34.9  & 33.7  & 33.0  & 31.6  & 26.1  & 14.6  \\
          & SHD   & 352.0  & 1031.0  & 1700.0  & 2863.0  & 3859.0  & 5308.0  & 6889.0  & 8301.0  & 9745.0  & 11262.0  & 33245.0  & 41131.0  \\
          & Runtime & 2.7   & 3.4   & 4.9   & 8.0   & 11.6  & 24.4  & 46.7  & 68.3  & 92.7  & 143.0  & 2521.7  & 7036.3  \\
    \midrule
    \multirow{4}[2]{*}{P/fGES} & F1 Adjacent & 77.7  & 67.3  & 61.0  & 53.4  & 51.8  & 47.0  & 41.7  & 41.0  & 38.1  & 36.8  & 8.2   & / \\
          & F1 Arrowhead & 62.9  & 55.2  & 50.8  & 43.8  & 42.4  & 38.4  & 34.3  & 33.4  & 31.2  & 29.9  & 6.5   & / \\
          & SHD   & 729.0  & 2307.0  & 4439.0  & 7890.0  & 10416.0  & 15021.0  & 21308.0  & 25178.0  & 31718.0  & 36831.0  & 27119.0  & 33000.0  \\
          & Runtime & 3.4   & 11.4  & 34.7  & 125.5  & 214.3  & 466.5  & 867.3  & 1370.8  & 1973.2  & 2838.6  & 6617.6  & 6629.0  \\
    \midrule
    \multirow{4}[2]{*}{PC-Stable} & F1 Adjacent & 73.1  & 62.5  & 56.8  & /     & /     & /     & /     & /     & /     & /     & /     & / \\
          & F1 Arrowhead & 36.6  & 29.9  & 27.2  & /     & /     & /     & /     & /     & /     & /     & /     & / \\
          & SHD   & 1174.0  & 3457.0  & 6321.0  & /     & /     & /     & /     & /     & /     & /     & /     & / \\
          & Runtime & 992.5  & 9984.5  & 38956.1  & 86400.0  & 86400.0  & 86400.0  & 86400.0  & 86400.0  & 86400.0  & 86400.0  & 86400.0  & 86400.0  \\
    \midrule
    \multirow{4}[2]{*}{fGES} & F1 Adjacent & 50.5  & 48.5  & 48.0  & 47.9  & 47.6  & /     & /     & /     & /     & /     & /     & / \\
          & F1 Arrowhead & 41.7  & 40.2  & 39.8  & 39.8  & 39.4  & /     & /     & /     & /     & /     & /     & / \\
          & SHD   & 2317.0  & 4982.0  & 7611.0  & 10208.0  & 12878.0  & /     & /     & /     & /     & /     & /     & / \\
          & Runtime & 2202.0  & 7762.0  & 19452.0  & 41567.9  & 68229.9  & 86400.0  & 86400.0  & 86400.0  & 86400.0  & 86400.0  & 86400.0  & 86400.0  \\
    \bottomrule
    \end{tabular}}
  \label{tab:big_asia}
\end{table*}

\begin{table*}[tbp]
  \centering
      		\caption{Testing results on Alarm problems with up to 30000 variables, in terms of F1 Adjacent, F1 Arrowhead, SHD, and runtime.
  	On each network, the mean ± std performance obtained by each method on 10 problems is reported.
  	The best performance in terms of accuracy metrics is indicated in \textbf{bold}.}
  	\resizebox{1.0\textwidth}{!}{
    \begin{tabular}{llcccccccccccc}
    \toprule
    \multicolumn{2}{c}{Problem} & Alarm-1000 & Alarm-2000 & Alarm-3000 & Alarm-4000 & Alarm-5000 & Alarm-6000 & Alarm-7000 & Alarm-8000 & Alarm-9000 & Alarm-10000 & Alarm-20000 & Alarm-30000 \\
    \multicolumn{2}{c}{($|V|,|E|$)} & (1036, 1417) & (2035, 2783) & (3034, 4150) & (4033, 5516) & (5032, 6882) & (6031, 8248) & (7030, 9614) & (8029, 10981) & (9028, 12347) & (10027, 13713) & (20017, 27375) & (30007, 41037) \\
    \midrule
    \multirow{4}[2]{*}{P/SLE} & F1 Adjacent & \textbf{83.8 } & \textbf{80.3 } & \textbf{79.4 } & \textbf{80.2 } & \textbf{80.4 } & \textbf{80.7 } & \textbf{80.3 } & \textbf{80.0 } & \textbf{78.9 } & \textbf{79.7 } & \textbf{78.8 } & \textbf{78.2 } \\
          & F1 Arrowhead & \textbf{74.5 } & \textbf{65.8 } & \textbf{64.6 } & \textbf{65.1 } & \textbf{66.1 } & \textbf{65.9 } & \textbf{66.7 } & \textbf{66.8 } & \textbf{63.6 } & \textbf{66.2 } & \textbf{64.8 } & \textbf{64.1 } \\
          & SHD   & \textbf{478.0 } & \textbf{1326.0 } & \textbf{2030.0 } & \textbf{2638.0 } & \textbf{3117.0 } & \textbf{3817.0 } & \textbf{4409.0 } & \textbf{4999.0 } & \textbf{6272.0 } & \textbf{6366.0 } & \textbf{13491.0 } & \textbf{20658.0 } \\
          & Runtime (s) & 7.2   & 12.0  & 18.6  & 21.1  & 35.2  & 34.9  & 44.0  & 65.7  & 91.0  & 98.6  & 345.6  & 563.6  \\
    \midrule
    \multirow{4}[2]{*}{P/SLE(D)} & F1 Adjacent & 69.5  & 59.5  & 51.9  & 50.4  & 46.2  & 43.6  & 42.1  & 39.5  & 35.9  & 34.1  & 29.4  & 25.9 \\
          & F1 Arrowhead & 62.3  & 49.2  & 42.1  & 41.3  & 38.2  & 35.5  & 35    & 32.9  & 29    & 28.3  & 15.7  & 13 \\
          & SHD   & 1086  & 3361  & 6549  & 9357  & 13371 & 18043 & 21892 & 27495 & 36108 & 42805 & 34557 & 50319 \\
          & Runtime (s) & 12.9  & 40.6  & 76.5  & 162.3 & 455.9 & 670.4 & 868.5 & 1755  & 2874.8 & 5262.8 & 7603.4 & 7263 \\
    \midrule
    \multirow{4}[2]{*}{P/SLE(R)} & F1 Adjacent & 72.2  & 68.0  & 60.5  & 60.2  & 55.6  & 55.2  & 53.8  & 52.0  & 50.1  & 49.2  & 32.5  & 11.8  \\
          & F1 Arrowhead & 42.8  & 38.9  & 32.0  & 31.9  & 29.5  & 28.7  & 28.1  & 27.1  & 26.0  & 25.3  & 16.5  & 6.0  \\
          & SHD   & 1046.0  & 2470.0  & 4802.0  & 6626.0  & 9501.0  & 11931.0  & 14221.0  & 17349.0  & 20817.0  & 23673.0  & 45085.0  & 47184.0  \\
          & Runtime (s) & 6.1   & 9.5   & 14.9  & 24.6  & 49.9  & 74.8  & 140.2  & 219.8  & 371.6  & 684.6  & 7166.5  & 7229.8  \\
    \midrule
    \multirow{4}[2]{*}{P/PC-Stable} & F1 Adjacent & 75.1  & 73.2  & 68.1  & 68.1  & 64.9  & 64.8  & 62.9  & 61.3  & 59.3  & 58.6  & 50.6  & 32.5  \\
          & F1 Arrowhead & 46.6  & 45.9  & 39.1  & 38.0  & 35.9  & 35.2  & 34.2  & 32.6  & 31.7  & 30.7  & 26.1  & 16.8  \\
          & SHD   & 899.0  & 1913.0  & 3512.0  & 4869.0  & 6670.0  & 8306.0  & 10163.0  & 12269.0  & 14830.0  & 16842.0  & 44491.0  & 54599.0  \\
          & Runtime (s) & 3.3   & 5.5   & 8.4   & 12.2  & 29.7  & 33.7  & 52.7  & 88.6  & 178.5  & 210.8  & 3649.3  & 6853.3  \\
    \midrule
    \multirow{4}[2]{*}{P/fGES} & F1 Adjacent & 69.5  & 59.5  & 51.9  & 50.4  & 46.2  & 43.6  & 42.1  & 39.5  & 35.9  & 34.1  & 5.9   & / \\
          & F1 Arrowhead & 62.3  & 49.2  & 42.1  & 41.3  & 38.2  & 35.5  & 35.0  & 32.9  & 29.0  & 28.3  & 4.8   & / \\
          & SHD   & 1086.0  & 3361.0  & 6549.0  & 9357.0  & 13371.0  & 18043.0  & 21892.0  & 27495.0  & 36108.0  & 42805.0  & 30065.0  & 41037.0  \\
          & Runtime (s) & 4.8   & 20.4  & 60.3  & 131.7  & 406.7  & 604.0  & 801.4  & 1600.0  & 2585.5  & 3896.7  & 6549.9  & 6792.9  \\
    \midrule
    \multirow{4}[2]{*}{PC-Stable} & F1 Adjacent & 72.6  & 64.8  & 58.4  & 55.7  & /     & /     & /     & /     & /     & /     & /     & / \\
          & F1 Arrowhead & 50.3  & 42.5  & 37.9  & 36.5  & /     & /     & /     & /     & /     & /     & /     & / \\
          & SHD   & 1261.0  & 3360.0  & 6192.0  & 9115.0  & /     & /     & /     & /     & /     & /     & /     & / \\
          & Runtime (s) & 273.5  & 2997.3  & 19091.2  & 85197.4  & 86400.0  & 86400.0  & 86400.0  & 86400.0  & 86400.0  & 86400.0  & 86400.0  & 86400.0  \\
    \midrule
    \multirow{4}[2]{*}{fGES} & F1 Adjacent & 58.2  & 55.7  & 53.9  & 54.2  & 53.9  & 53.8  & 53.8  & 53.7  & 53.6  & /     & /     & / \\
          & F1 Arrowhead & 52.8  & 47.2  & 44.9  & 45.9  & 46.2  & 45.5  & 46.0  & 46.4  & 45.2  & /     & /     & / \\
          & SHD   & 2107.0  & 4697.0  & 7364.0  & 9707.0  & 12123.0  & 14678.0  & 17045.0  & 19423.0  & 22128.0  & /     & /     & / \\
          & Runtime (s) & 600.9  & 2369.1  & 5453.8  & 10336.8  & 16623.9  & 26760.6  & 40465.7  & 60305.9  & 84889.9  & 86400.0  & 86400.0  & 86400.0  \\
    \bottomrule
    \end{tabular}}
  \label{tab:big_alarm}
\end{table*}

The second observation is that on testing problems with 1000 variables, P/PC-Stable and P/fGES often achieve higher learning accuracy than PC-Stable and fGES, respectively.
Moreover, when the number of variables reaches 10,000, PC-Stable and fGES are unable to find solutions within 24 hours, while P/PC-Stable and P/fGES are able to.
These findings show that D\&C strategies such as PEF can indeed enhance the capabilities of handling large BNs, which is consistent with the observations in~\cite{gu2020learning}.

Finally, all PEF-based methods generally consume much less runtime than non-PEF-based methods, attributed to the underlying D\&C strategy.
Among PEF-based methods, P/SLE often has the shortest or close to the shortest runtime, and for most testing problems, it consistently outputs the final solution within a reasonable time (less than 1000 seconds).
In summary, all the above findings affirmatively answer RQ1, i.e., the SLE learned by Auto-SLE substantially improves PEF in learning the structure of large BNs.

\subsection{Generalization to Larger Problems}
We now investigate RQ2.
Specifically, we generate Alarm and Asia testing problems with 20,000 and 30,000 variables, and plot in Figure~\ref{fig:generalization} the performance of the compared methods as the variable number ranges from 1000 to 30,000 (detailed results are presented in Table~\ref{tab:big_asia} and ~\ref{tab:big_alarm}).
Note that when the variable number exceeds 20,000, the subproblems resulted from the partition step of PEF would be much larger in size than the training problems.
It can be seen from Figure~\ref{fig:generalization} that P/SLE generalizes well to larger problems, maintaining relatively stable learning accuracy. 
In contrast, the performance of all baselines deteriorates rapidly as the number of variables increases.

\subsection{Generalization to Problems with no Block Structure}
The above results have demonstrated that P/SLE can stably achieve high learning accuracy for networks with a block structure.
Of course, there are many real-world networks without any block structure~\cite{OlesenM02}.
Although PEF-based methods are not specifically designed for such networks, it is worth testing P/SLE on them to provide a complete spectrum of its performance.
Specifically, we apply P/SLE for gene expression data analysis, a traditional application of structure leanring for BN.
We use the real-world gene dataset Yeast~\cite{schaffter2011genenetweaver} involving 4,441 nodes and 12,873 edges, where the underlying networks are commonly referred to as gene regulatory networks.
Besides, we use small-world networks with 1000 and 10,000 nodes, based on \textit{igraph}~\cite{csardi2006igraph} and the the Watts-Strogatz (WS) model~\cite{watts1998collective}.
We choose the WS model because: (i) it has no block structure, and (ii) it is not included in the \textit{bnlearn} repository, thereby enabling evaluation of the generalization of P/SLE to network characteristics beyond the training set.
As before, we generate 10 testing problems based on the network with 1000 nodes and one testing problem based on the network with 10,000 nodes.
The testing results are presented in Table~\ref{tab:no_block_results}.

One can observe that P/SLE still achieves competitive learning accuracy on these networks. 
On WS problems with 1000 variables, it always achieves the best performance or the performance not significantly different from the best, across all accuracy metrics.
On WS problems with 10,000 variables and Yeast, the advantages of P/SLE become pronounced, similar to the previous observations on networks with block structures.
In summary, these findings show the generalization ability of the learned SLE across network characteristics beyond the training problem set.

\section{Discussion and Conclusion}
\label{sec:conclusion}
This work introduced the idea of employing SLEs for scalable BN structure learning.
To facilitate the development of such ensembles, the SLE learning problem was formally modeled, leading to the proposal of Auto-SLE, a simple yet effective greedy approach for automatically learning high-quality SLEs.
The theoretical properties of Auto-SLE were also analyzed, demonstrating its ability to yield near-optimal SLEs for a given training set.
The SLE learned by Auto-SLE was integrated into the estimation step of PEF, resulting in P/SLE.
Extensive experiments conducted on both synthetic and real-world datasets demonstrated that P/SLE consistently achieved high accuracy in learning large BNs and generalized well across problem sizes and network characteristics.
These results have indicated the potential of applying SLEs to further improve D\&C methods for learning large BNs and have verified the effectiveness of Auto-SLE for learning SLEs.

Several directions exist for future work.
First, the learned SLE can be integrated into other D\&C methods beyond PEF, such as~\cite{ZhangZYGWZH22,WangBCLZC25}.
Second, while using an SLE by running its member algorithms in parallel would not significantly increase the wall-clock runtime compared to running a single algorithm, executing them sequentially in the absence of multi-core compute leads to significantly longer runtime.
A potential solution involves training a selection model~\cite{ZhaoLYX2021} that predicts the best-performing algorithm within the SLE for a given problem, allowing only that specific algorithm to be run.
Third, the generalization ability of the SLE learned by Auto-SLe relies on a training set with diverse networks of varying sizes.
Future research could explore alternative strategies for acquiring suitable training data when such a diverse set cannot be readily collected, such as the use of large language models (LLMs) to assist in data creation.


\bibliographystyle{IEEEtran}
\bibliography{Auto-SLE}


\end{document}